\newtheorem{theorem}{Theorem}
\newtheorem{lemma}[theorem]{Lemma}
\newtheorem{corollary}[theorem]{Corollary}
\newcommand{\nsga}{NSGA\=/II\xspace}
\newcommand{\nsgathree}{NSGA\=/III\xspace}
\newcommand{\LO}{\textsc{Leading\-Ones}\xspace}
\newcommand{\leadingones}{\LO}
\newcommand{\jump}{\textsc{Jump}\xspace}
\newcommand{\omm}{\textsc{OMM}\xspace}
\newcommand{\cocz}{\textsc{COCZ}\xspace}
\newcommand{\lotz}{\textsc{LOTZ}\xspace}
\newcommand{\ojzj}{\textsc{OJZJ}\xspace}
\newcommand{\ommLong}{\textsc{OneMinMax}\xspace}
\newcommand{\lotzLong}{\textsc{LeadingOnesTrailingZeros}\xspace}
\newcommand{\ojzjLong}{\textsc{OneJumpZeroJump}\xspace}
\newcommand{\R}{\ensuremath{\mathbb{R}}}
\newcommand{\N}{\ensuremath{\mathbb{N}}}
\newcommand{\Z}{\ensuremath{\mathbb{Z}}}
\DeclareMathOperator{\Bin}{Bin}
\newcommand{\cdis}{\mathrm{cDis}}
\newcommand{\eps}{\varepsilon}
\newcommand{\card}[1]{\lvert#1\rvert}
\let\originalleft\left
\let\originalright\right
\renewcommand{\left}{\mathopen{}\mathclose\bgroup\originalleft}
\renewcommand{\right}{\aftergroup\egroup\originalright}
\title{Speeding Up the NSGA-II With a Simple Tie-Breaking Rule}
\author{
  Benjamin Doerr\textsuperscript{\rm 1},
  Tudor Ivan\textsuperscript{\rm 2},
  Martin~S. Krejca\textsuperscript{\rm 1}
}
\begin{document}

\maketitle

\sloppy{
\begin{abstract}
  The non-dominated sorting genetic algorithm~II (\nsga) is the most popular multi-objective optimization heuristic. Recent mathematical runtime analyses have detected two shortcomings in discrete search spaces, namely, that the NSGA-II has difficulties with more than two objectives and that it is very sensitive to the choice of the population size.
  To overcome these difficulties, we analyze a simple tie-breaking rule in the selection of the next population. Similar rules have been proposed before, but have found only little acceptance. We prove the effectiveness of our tie-breaking rule via mathematical runtime analyses on the classic \ommLong, \lotzLong, and \ojzjLong benchmarks. We prove that this modified NSGA-II can optimize the three benchmarks efficiently also for many objectives, in contrast to the exponential lower runtime bound previously shown for \ommLong with three or more objectives. For the bi-objective problems, we show runtime guarantees that do not increase when moderately increasing the population size over the minimum admissible size. For example, for the \ojzjLong problem with representation length~$n$ and gap parameter~$k$, we show a runtime guarantee of $O(\max\{n^{k+1},Nn\})$ function evaluations when the population size is at least four times the size of the Pareto front. For population sizes larger than the minimal choice $N = \Theta(n)$, this result improves considerably over the $\Theta(Nn^k)$ runtime of the classic \nsga.
\end{abstract}

\section{Introduction}

Many real-world optimization tasks face several, usually conflicting objectives. One of the most successful approaches to such \emph{multi-objective} optimization problems are evolutionary algorithms (EAs) \cite{CoelloLV07,ZhouQLZSZ11}, with the \nsga~\cite{DebPAM02} standing out as the by far dominant algorithm in practice (over $50\,000$ citations on Google Scholar).

Recent mathematical works have shown many positive results for the \nsga (see \emph{Previous Works}) but have also exhibited two difficulties. (1)~For more than two objectives, the crowding distance as selection criterion seems to have some shortcomings. This was formally proven for the optimization of the simple \ommLong benchmark, where an exponential lower bound on the runtime was shown for three and more objectives when the population size is linear in the size of the Pareto front~\cite{ZhengD24many}. The proof of this result suggests that similar problems exist for many optimization problems with three or more objectives. (2)~All runtime guarantees proven for the \nsga increase linearly with the population size. For some settings, even matching lower bounds proving this effect were proven~\cite{DoerrQ23LB}. This behavior is very different from many single-objective EAs, where a moderate increase of the population size increases the cost of an iteration but at the same time reduces the number of iterations in a way that the total runtime (number of function evaluations) is at most little affected.

The  reason for this undesired behavior of the \nsga, as the proofs by \citet{DoerrQ23LB} reveal, is that the population of the \nsga is typically not evenly distributed on the known part of the Pareto front.
Instead, the distribution is heavily skewed toward the inner region of the Pareto front.

Inspired by this observation, we propose to add a simple tie-breaking criterion to the selection of the next population. In the classic \nsga, selection is done according to non-dominated sorting, ties are broken according to the crowding distance, and remaining ties are broken randomly. We add the number of individuals having a certain objective value as the third criterion, and break only the remaining ties randomly. We note that a similar idea was suggested already by \citet{FortinP13} and supported by empirical results, but has not made it into the typical use of the \nsga.

Our tie-breaker solves both problems. We rigorously study the runtime for the three most established benchmarks in the theory of multi-objective EAs (MOEAs): \ommLong, \lotzLong, and \ojzjLong.
For all defined versions with at least three objectives and for constant gap parameter~$k$, we show that our \nsga with a population size exceeding the Pareto front size only by a lower-order term efficiently solves the problem when the Pareto front size is polynomial (\Cref{thm:manyo}).

For the bi-objective versions of these problems, we prove runtime guarantees showing that for a certain range of the population size, the runtime is (asymptotically) not affected by this parameter. The size of this range depends on the difficulty of the problem.
For the difficult \ojzjLong benchmark with problem size~$n$ and gap parameter~$k$, our runtime guarantee is $O(n^{k+1})$ for all population sizes $N$ between $4(n-2k+3)$ and $O(n^k)$, and it is $O(Nn)$ for $N = \Omega(n^k)$. Compared to the runtime of $\Theta(Nn^k)$ proven by \citet{DoerrQ23tec}, this is a noteworthy speed-up for larger population sizes. From a practical perspective, this result indicates that our tie-breaking rule significantly reduces the need for a careful optimization of the algorithm parameter~$N$.
We support this claim empirically, showing that this speed-up is already noticeable when the chosen and optimal population size deviate only by a constant factor.

Overall, this work shows that adding the simple tie-breaker of preferring individuals with rarer objective values can lead to considerable performance gains and greatly reduce the need of determining an optimal population size.

All proofs and some additional details are in the appendix.

\section{Previous Works}
\label{sec:previousWork}

This being a theoretical work, for space reasons, we refer to the surveys by Coello et~al.~\shortcite{CoelloLV07} and by \citet{ZhouQLZSZ11} for the success of MOEAs in practical applications.

The mathematical analysis of randomized search heuristics has supported for a long time the development of these algorithms~\cite{NeumannW10,AugerD11,Jansen13,ZhouYQ19,DoerrN20}, including MOEAs. The runtime analysis for MOEAs was started in~\cite{LaumannsTZWD02,Giel03,Thierens03} with very simplistic algorithms like the \emph{simple evolutionary multi-objective optimizer (SEMO)} or the \emph{global SEMO}. Due to the complex population dynamics, it took many years until more prominent MOEAs could be analyzed, such as the $(\mu+1)$ SIBEA~\cite{BrockhoffFN08}, MOEA/D~\cite{LiZZZ16}, \nsga~\cite{ZhengLD22}, \nsgathree~\cite{WiethegerD23}, or SMS-EMOA~\cite{BianZLQ23}.

The analysis of the \nsga, the by far dominant algorithm in practice, had a significant impact on the field and was quickly followed up by other runtime analyses for this algorithm. The vast majority of these prove runtime guarantees for bi-objective problems. Some are comparable to those previously shown for the (G)SEMO~\cite{BianQ22,DoerrQ23tec,DangOSS23aaai,CerfDHKW23,DengZLLD24}, others explore new phenomena like approximation properties~\cite{ZhengD24approx}, better robustness to noise~\cite{DangOSS23gecco}, or new ways how crossover~\cite{DoerrQ23crossover} or archives~\cite{BianRLQ24} can be advantageous. All these works, except for~\cite{BianRLQ24}, require that the population size is at least a constant factor larger than the size of the Pareto front. Increasing the population size further leads to a proportional increase of the runtime guarantee. Lower bounds for the \ommLong and \ojzjLong benchmarks show that this increase is real, i.e., the runtime is roughly proportional to the population size. This runtime behavior is very different from what is known from single-objective optimization, where results like~\cite{JansenJW05,Witt06,DoerrK15} show that often there is a regime where an increase of the population size does not lead to an increase of the runtime. Hence, for the \nsga, the choice of the population size is much more critical than for many single-objective algorithms (or the (G)SEMO with its flexible population size). The results most relevant for our work are the upper and lower bounds for the bi-objective \ommLong, \lotzLong, and \ojzjLong problems. We describe these in the sections where they are relevant.

So far, there is only one runtime analysis of the \nsga on a problem with more than two objectives \cite{ZhengD24many}.
It shows that the \nsga takes at least exponential time to optimize \ommLong in at least three objectives, for any population size at most a constant factor larger than the Pareto front (see also further below). This result was recently extended to \lotzLong \cite{DoerrKK24arxiv}.

\section{Preliminaries}
\label{sec:preliminaries}

The natural numbers~$\N$ include~$0$.
For $m, n \in \N$, we define $[m..n] \coloneqq [m,n] \bigcap \N$ as well as $[n] \coloneqq [1..n]$.

Given $m, n \in \N$, an \emph{$m$-objective function}~$f$ is a tuple $(f_j)_{j \in [m]}$ where, for all $j \in [m]$, it holds that $f_j\colon \{0, 1\}^n \to \R$.
Given an $m$-objective function, we implicitly assume that we are also given~$n$.
We call each $x \in \{0, 1\}^n$ an \emph{individual} and $f(x) \coloneqq (f_j(x))_{j \in [m]}$ the \emph{objective value of~$x$}.
For each $i \in [n]$, we denote the $i$-th component of~$x$ by~$x_i$.
We denote the number of~$1$s of~$x$ by~$|x|_1$, and its number of~$0$s by~$|x|_0$.

We consider the \emph{maximization} of $m$-objective functions.
The objective values of an $m$-objective function~$f$ induce a weak partial order on the individuals, denoted by~$\succeq$.
For $x, y \in \{0, 1\}^n$, we say that \emph{$x$ weakly dominates~$y$} (written $x \succeq y$) if and only if for all $j \in [m]$ holds that $f_j(x) \ge f_j(y)$.
If one of these inequalities is strict, we say that~$x$ \emph{strictly} dominates~$y$ (written $x \succ y$).
We say that~$x$ is \emph{Pareto-optimal} if and only if~$x$ is not strictly dominated by any individual.
We call the set of objective values of all Pareto-optimal individuals the \emph{Pareto front} of~$f$, and we call the set of all Pareto-optimal individuals the \emph{Pareto set} of~$f$.

\subsection{The \nsga}
\label{sec:preliminaries:nsga}

The non-dominated sorting genetic algorithm~II (\nsga, \Cref{alg:NSGA}) is the most popular heuristic for multi-objective optimization.
It optimizes a given $m$-objective function iteratively, maintaining a multi-set (a \emph{population}) of individuals of a given size $N \in \N_{\geq 1}$.
This population is initialized with individuals chosen uniformly at random.

In each iteration, the \nsga generates an additional, new \emph{offspring} population of~$N$ individuals as we detail below.
Out of the~$2N$ individuals from the combined population of current and offspring population, the algorithm selects~$N$ individuals as the new population for the next iteration.
To this end, the \nsga utilizes two characteristics defined over individuals, which we also both detail below: \emph{non-dominated ranks} and \emph{crowding distance}.
The~$2N$ individuals are sorted lexicographically by first minimizing the rank and then by maximizing the crowding distance.
Ties are broken uniformly at random (u.a.r.).
The first~$N$ individuals from the sorted population are kept for the next iteration.

\Cref{alg:NSGA} showcases the \nsga in a way suited toward our modification described in the following section.

\paragraph{Runtime.}
The runtime of an algorithm optimizing a function~$f$ is the (random) number of evaluations of~$f$ until the objective values of the population cover the Pareto front of~$f$.
We assume that the objective value of each individual is evaluated exactly once when it is created.
Thus, the \nsga uses~$N$ function evaluations when initializing the population, and~$N$ function evaluations per iteration for the offspring population.
As the number of function evaluations is essentially~$N$ times the number of iterations, we use the term \emph{runtime} interchangeably for both quantities and always specify which one we mean.
We state our runtime results in big-O notation with asymptotics in the problem size~$n$.

\paragraph*{Offspring generation.}
Given a \emph{parent} population~$P$, the \nsga creates~$N$ \emph{offspring} by repeating the following \emph{standard bit mutation}~$N$ times:
Choose an individual $x \in P \subseteq \{0, 1\}^n$ uniformly at random, and create a copy~$y$ of~$x$ where one flips each component of~$x$ independently with probability~$1/n$.
That is, for all $i \in [n]$, we have $y_i = 1 - x_i$ with independent probability~$1/n$, and $y_i = x_i$ otherwise.

\paragraph*{Non-dominated ranks.}
Given a population~$R$, the \emph{rank} of each $x \in R$ is defined inductively and roughly represents how many layers in~$R$ dominate~$x$.
Individuals in~$R$ that are not strictly dominated by any individual in~$R$ receive rank~$1$.
The population of all such individuals is denoted as $F_1 \coloneqq \{x \in R \mid \forall y \in R\colon x \nprec y\}$.
The population of all individuals of rank $j \in \N_{\geq 2}$ is the population of all individuals in~$R$, after removing all individuals of ranks~$1$ to~$j - 1$, that are not strictly dominated.
That is, $F_j \coloneqq \{x \in R \setminus \bigcup_{k \in [j - 1]} F_k \mid \forall y \in R \setminus \bigcup_{k \in [j - 1]}\colon x \nprec y\}$.
If there are in total $r \in \N_{\geq 1}$ different ranks of~$R$, then $(F_j)_{j \in [r]}$ is a partition of~$R$.

Given such a partition of~$R$, we say that a rank $j^* \in [r]$ is \emph{critical} if and only if~$j^*$ is the minimal index such that the population of all individuals up to rank~$j^*$ is at least~$N$.
That is, $|\bigcup_{j \in [j^* - 1]} F_j| < N$ and $|\bigcup_{j \in [j^*]} F_j| \geq N$.
Individuals with a rank strictly smaller than~$j^*$ are definitely selected for the next iteration, and individuals with a strictly larger rank than~$j^*$ are definitely not selected.
Among individuals with a rank of exactly~$j^*$, some individuals may be selected and some not, thus requiring further means to make a choice.

\paragraph*{Crowding distance.}
Given a population~$F$, the \emph{crowding distance} (CD) of each $x \in F$, denoted by $\cdis(x)$, is the sum of the CD of~$x$ \emph{per objective}.
The CD of~$x$ for objective $j \in [m]$ is as follows:
Let $a = |F|$, and let $(S_{j.i})_{i \in [a]}$ denote~$F$ sorted by increasing value in objective~$j$, breaking ties arbitrarily.
The CD of~$S_{j.1}$ and of~$S_{j.a}$ for objective~$j$ is infinity.
For all $i \in [2 .. a - 1]$, the CD of~$S_{j.i}$ for objective~$j$ is $\bigl(f(S_{j.i + 1}) - f(S_{j.i - 1})\bigr)/\bigl(f(S_{j.N}) - f(S_{j.1})\bigr)$.

Let $d \in [|F|]$, and let $(C_c)_{c \in [k]}$ be a partition of~$F$ such that for all $c \in [k]$, all individuals in~$C_c$ have the same CD and that for all $c_1, c_2 \in [k]$ with $c_1 < c_2$, the CD of~$C_{c_1}$ is strictly larger than that of~$C_{c_2}$.
We say $c^* \in [k]$ is the \emph{critical CD index of $(C_c)_{c \in [k]}$ with respect to~$d$} if and only if $|\bigcup_{c \in [c^* - 1]} C_c| < d$ and $|\bigcup_{c \in [c^*]} C_c| \geq d$.
When selecting~$d$ individuals from~$F$, individuals with a CD in a population of index less than~$c^*$ are definitely selected, and those with a CD of a population with a strictly larger index are not.
From~$C_{c^*}$, some individuals may be selected and some not.

\begin{algorithm}[t]
  \caption{\label{alg:NSGA}
    The (classic) non-dominated sorting genetic algorithm~II (\nsga) with population size $N\in\N_{\geq 1}$, optimizing an $m$-objective function.
  }
  $P_0 \gets$ population of~$N$ individuals, each u.a.r.\;
  $t \gets 0$\;
  \While{termination criterion not met}{
    $Q_t \gets$ offspring population of~$P_t$\;
    $R_t \gets P_t \cup Q_t$\;
    $(F_j)_{j \in [r]} \gets$ partition of~$R_t$ w.r.t. non-dom. ranks\;
    $j^* \gets$ critical rank of $(F_j)_{j \in [r]}$\;
    $(C_c)_{c \in [k]} \gets$ partition of~$F_{j^*}$ w.r.t. crowd. dist.\;
    $c^* \gets$ critical crowding distance index of $(C_c)_{c \in [k]}$ w.r.t. $N - |\bigcup_{j \in [j^* - 1]} F_j|$\;
    $s \gets N - |\bigcup_{j \in [j^* - 1]} F_j \cup \bigcup_{c \in [c^* - 1]} C_c|$\;
    $W \gets$ sub-pop. of~$C_{c^*}$ of cardinality~$s$, u.a.r.\;\label{alg:NSGA_tieBreaker}
    $P_{t + 1} \gets \bigcup_{j \in [j^* - 1]} F_j \cup \bigcup_{c \in [c^* - 1]} C_c \cup W$\;
    $t \gets t + 1$\;
  }
\end{algorithm}

\subsection{Benchmarks}  \label{sec:preliminaries:benchmarks}

We consider the three most common functions for the theoretical analysis of multi-objective search heuristics (see \emph{Previous Works}). We define here their bi-objective versions, which are the most common ones, and build on these definitions later when defining the many-objective analogs.

The \ommLong (\omm) benchmark~\cite{GielL10} returns the number of~$0$s and~$1$s of each individual, formally, $\omm\colon x \mapsto (|x|_0, |x|_1)$.
Each individual is Pareto-optimal.
The Pareto front is $\{(i, n - i) \mid i \in [0 .. n]\}$.

The \ojzjLong (\ojzj) benchmark~\cite{DoerrZ21aaai} extends the classic \jump benchmark \cite{DrosteJW02} to several objectives. It is defined similarly as \omm but has an additional parameter $k \in [2 .. n]$.
It is effectively identical to \omm for all individuals whose number of~$1$s is between~$k$ and $n- k$ or is~$0$ or~$n$.
The objective values of these individuals constitute the Pareto front of the function.
For all other individuals, the objective value is strictly worse.
Hence, once an algorithm finds solutions with~$k$ or $n - k$ $1$s, it needs to change at least~$k$ positions at once in a solution in order to expand the Pareto front, which is usually a hard task.
Formally, for all $i \in \{0, 1\}$ and all $x \in \{0, 1\}^n$, let
\begin{equation*}
  J^{(i)}(x) =
  \begin{cases}
    k + |x|_i & \textrm{if } |x|_i \in [0 .. n - k] \cup \{n\}, \\
    n - |x|_i & \textrm{else.}
  \end{cases}
\end{equation*}
Then $\ojzj(x) = (J^{(1)}(x), J^{(0)}(x))$, with the Pareto front $\{(i, n + 2k - i) \mid i \in [2k .. n] \cup \{k, n + k\}\}$.

The \lotzLong (\lotz) benchmark~\cite{LaumannsTZ04}, a multi-objective version of \leadingones \cite{Rudolph97}, returns the length of the longest prefix of~$1$s and suffix of~$0$s, formally
\begin{equation*}
  x \mapsto \bigl(\textstyle\sum\nolimits_{i \in [n]} \prod\nolimits_{j \in [i]} x_j, \sum\nolimits_{i \in [n]} \prod\nolimits_{j \in [i .. n]} (1 - x_j)\bigr) .
\end{equation*}
The Pareto front is the same as that of \omm, but the Pareto set is $\{1^i 0^{n - i} \mid i \in [0 .. n]\}$.

\section{Improved Tie-Breaking for the \nsga}
\label{sec:improvedTieBreaking}

The \nsga selects individuals elaborately; first via the non-dominated ranks, second via the crowding distance, and last uniformly at random.
While the final tie-breaker seems reasonable, it neglects the structure of the population.

In more detail, since the uniform selection is performed over the subpopulation~$C_{c^*}$ from \cref{alg:NSGA_tieBreaker} in \Cref{alg:NSGA}, any imbalances with respect to different objective values are carried over in expectation to the selected population~$W$.
That is, if most of the individuals from~$C_{c^*}$ have objective value $v_1 \in \R^m$ and only very few have objective value $v_2 \in \R^m$, then it is more likely for individuals with objective value~$v_1$ to be selected although individuals with objective value~$v_2$ might also have interesting properties.
To circumvent this problem, we propose to select the individuals from~$C_{c^*}$ as evenly as possible from all the different objective values.

\paragraph{Balanced tie-breaking.}
We replace \cref{alg:NSGA_tieBreaker} in \Cref{alg:NSGA} with the following procedure, using the same notation as in the pseudo code:
Partition~$C_{c^*}$ with respect to its objective values into $(C'_{c})_{c \in [a]}$, assuming~$a$ different objective values in~$C_{c^*}$.
That is, for $U \coloneqq \{f(x) \mid x \in C_{c^*}\}$ (being a set without duplicates) and for each $u \in U$, there is exactly one $c \in [a]$ with $C'_{c} = \{x \in C_{c^*} \mid f(x) = u\}$ (where~$C'_c$ is a multi-set).
For each $c \in [a]$, select $\min(|C'_c|, \lfloor s/a \rfloor)$ individuals uniformly at random from~$C'_c$, calling this selected population $\widetilde{C}_c$.
That is, $\widetilde{C}_c \subseteq C'_c$ with $|\widetilde{C}_c| = \min(|C'_c|, \lfloor s/a \rfloor)$, chosen uniformly at random among all sub-multi-sets of~$C'_c$ of cardinality $\min(|C'_c|, \lfloor s/a \rfloor)$.
Add all individuals in $\bigcup_{c \in [a]} \widetilde{C}_c$ to~$W$.
If this does not select sufficiently many individuals, that is, if $|\bigcup_{c \in [a]} \widetilde{C}_c| < s$, then select the missing number of individuals uniformly at random from the remaining population, that is, from $C_{c^*} \setminus \bigcup_{c \in [a]} \widetilde{C}_c$.

Since this tie-breaking aims at balancing the amount of individuals per objective value during the third tie-breaker, we call the modified algorithm the \emph{balanced} \nsga.

\textbf{Additional cost.}
Based on our experiments in the empirical section, we observed that balanced tie-breaking is slower than random tie-breaking in terms of wall clock time by a factor of around~$10$ on average.
However, the total time spent on balanced tie-breaking is still, on average, only~$15$\,\% of the total time spent on non-dominated sorting, which is always required during selection.
Moreover, as we detail in our empirical evaluation, the overall number of function evaluations (and thus wall clock time) of the balanced \nsga is typically far faster than that of the classic \nsga.

\subsection{Properties of the Balanced \nsga}

For the classic \nsga, if the population size~$N$ is large enough w.r.t. the Pareto front of the objective function, no value on the Pareto front is lost.
We prove that this same useful property also holds for the balanced \nsga.

The following lemma proves an upper bound on the number of individuals with positive crowding distance among those with critical rank.
The lemma is adapted from an argument in the proof of Lemma~$1$ by Zheng et~al.~\shortcite{ZhengLD22}.

\begin{lemma}
  \label{lem:max2mpositives}
  Consider the balanced \nsga optimizing an $m$-objective function~$f$. For each iteration $t \in \N$ we have that for each objective value in the critical rank $A \in f(F_{i^*})$ there exist at most $2m$ individuals $x \in F_{i^*} \subseteq R_t$ with $f(x) = A$ such that $\cdis(x) > 0$.
\end{lemma}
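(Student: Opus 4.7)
The plan is to bound the number of positive-$\cdis$ individuals objective by objective and then combine via a union bound. By definition, $\cdis(x) = \sum_{j \in [m]} \cdis_j(x)$ is the sum of the non-negative per-objective contributions $\cdis_j(x)$, so any $x$ with $\cdis(x) > 0$ must satisfy $\cdis_j(x) > 0$ for at least one $j \in [m]$. It therefore suffices to show that, for every fixed $j$, at most two individuals $x \in F_{i^*}$ with $f(x) = A$ satisfy $\cdis_j(x) > 0$; summing over $j$ then yields the claimed bound of $2m$.

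For this per-objective step I fix $j \in [m]$ and work with the sorted sequence $(S_{j.i})_{i \in [a]}$ of $F_{i^*}$ used in the CD definition, where $a = |F_{i^*}|$. The key observation is that all individuals of $F_{i^*}$ whose $j$-th coordinate equals $A_j$ (the $j$-th component of $A$) form a single contiguous sub-block of this sorted sequence; the internal ordering inside this block is fixed by the arbitrary tie-breaker of the sort. For any $x$ lying strictly inside the block, both neighbors in the sort also have $f_j$-value equal to $A_j$, hence $f_j(S_{j.i+1}) - f_j(S_{j.i-1}) = 0$ and $\cdis_j(x) = 0$. Consequently only the two extremes of the block can satisfy $\cdis_j > 0$: each extreme either has an outside-the-block neighbor with a strictly different $f_j$-value (giving a strictly positive numerator) or coincides with position~$1$ or position~$a$ of the overall sort (giving infinite CD). Since $\{x \in F_{i^*} : f(x) = A\} \subseteq \{x \in F_{i^*} : f_j(x) = A_j\}$, at most two individuals with $f(x) = A$ can have $\cdis_j(x) > 0$.

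I do not expect a real obstacle here; the whole argument hinges on the contiguous-block observation, and once that is in place the vanishing in the interior follows directly from the defining formula. The only mild subtlety is the boundary case where a block extreme sits at position~$1$ or~$a$ of the sort, but infinite CD still counts as strictly positive and so the case is harmless. I would also remark that the argument uses only the definition of crowding distance and nothing about the selection step itself; in particular the bound is insensitive to our tie-breaking modification and applies uniformly to the classic and the balanced \nsga, consistent with its being an adaptation of the earlier argument of \citet{ZhengLD22}.
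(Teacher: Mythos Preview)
Your proposal is correct and follows essentially the same route as the paper: both arguments identify, for each objective~$j$, the contiguous block of positions in the sorted list with $f_j$-value equal to~$A_j$, observe that any individual strictly inside this block has zero $j$-th crowding-distance contribution, and then take the union over the $m$ objectives to obtain the $2m$ bound. The paper phrases the final step via the complement (individuals outside the intersection of the block interiors lie in $\bigcup_i\{S_{i.l_i},S_{i.r_i}\}$), whereas you phrase it directly as a union bound on the per-objective contributions; these are the same argument.
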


\Cref{lem:max2mpositives} yields that in the balanced \nsga there is always a fair number of individuals with critical rank.
\begin{lemma}
  \label{lem:numberOfElementsBound}
  Consider the balanced \nsga optimizing an $m$-objective function.
  Assume that at some iteration $t \in \N$ we select $C \in \N$ individuals from the critical rank $F_{i^*} \subseteq R_t$ with size of the objective values set $\card{f(F_{i^*})} = S$.
  Then, for any $A \in f(F_{i^*})$ we keep at least $\min \left( \max(\lfloor \frac{C}{S} \rfloor - 2m, 0), \card{ \{x \in F_{i^*} \mid f(x) = A \}} \right) $ individuals with $f(x) = A$ in $P_{t + 1}$.
\end{lemma}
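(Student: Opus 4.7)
The plan is to track, for a fixed objective value $A \in f(F_{i^*})$, the three disjoint locations where its individuals can lie within the critical rank: the auto-retained blocks $\bigcup_{c < c^*} C_c$, the balanced-selection block $C_{c^*}$, and the discarded blocks $\bigcup_{c > c^*} C_c$. The workhorse will be \Cref{lem:max2mpositives}, which bounds by $2m$ the positive-CD individuals sharing any single objective value in $F_{i^*}$. I will first use it to lower bound the per-objective quota of the balanced procedure, and then case-split on the crowding distance of $C_{c^*}$ to handle the only situation in which individuals with objective value $A$ could be lost to the discarded block.

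First, I will estimate the number $s := C - |\bigcup_{c < c^*} C_c|$ of slots that the balanced tie-breaker distributes across the $a \leq S$ distinct objective values present in $C_{c^*}$. Every individual in $\bigcup_{c < c^*} C_c$ has CD strictly larger than that of $C_{c^*}$, hence positive. Summing the bound $2m$ from \Cref{lem:max2mpositives} over the $S$ distinct objective values in $f(F_{i^*})$ gives $|\bigcup_{c < c^*} C_c| \leq 2mS$, and therefore $s \geq C - 2mS$. Since $a \leq S$, a routine floor computation yields $\lfloor s/a \rfloor \geq \lfloor C/S \rfloor - 2m$, which is exactly the per-objective guarantee I want the balanced pass to inherit.

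Next, I will split on whether $C_{c^*}$ has positive crowding distance. If it does, then $\bigcup_{c \leq c^*} C_c$ is entirely composed of positive-CD individuals and, by the same summing argument, has cardinality at most $2mS$. Combined with $C \leq |\bigcup_{c \leq c^*} C_c|$ from the definition of the critical CD index, this forces $\lfloor C/S \rfloor \leq 2m$, so the claimed bound collapses to $\min(0, k_A) = 0$ and there is nothing to prove. If instead $C_{c^*}$ has CD equal to $0$, then every individual with objective value $A$ sits either in $\bigcup_{c < c^*} C_c$ (call this count $q^{<}_A$) or in $C_{c^*}$ itself (call this count $q^{=}_A$), because all CD-$0$ individuals belong to the single last CD class, which is now exactly $C_{c^*}$. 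In particular $k_A := |\{x \in F_{i^*} \mid f(x) = A\}| = q^{<}_A + q^{=}_A$. The first pass of the balanced tie-breaker preserves at least $\min(q^{=}_A, \lfloor s/a \rfloor)$ of the $A$-individuals inside $C_{c^*}$, and the optional fill-up step can only increase this, so the total number of $A$-individuals retained in $P_{t+1}$ is at least $q^{<}_A + \min(q^{=}_A, \lfloor s/a \rfloor)$. A two-subcase split on whether $q^{=}_A \geq \lfloor s/a \rfloor$ then finishes: the first subcase gives $\geq \lfloor C/S \rfloor - 2m$ via the first step, and the second subcase gives $q^{<}_A + q^{=}_A = k_A$.

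The main difficulty I anticipate is not any single inequality but the careful bookkeeping of how the crowding distance at $C_{c^*}$ dictates whether $A$-individuals can be lost to the discarded block $\bigcup_{c > c^*} C_c$. The key observation that makes the proof go through is that precisely in the regime where such a loss is possible (positive CD at $C_{c^*}$), the counting argument forces $\lfloor C/S \rfloor \leq 2m$ and the promised lower bound is already vacuous, while in the complementary regime no $A$-individual is ever discarded and the balanced tie-breaker's $\lfloor s/a \rfloor$ guarantee does the rest.
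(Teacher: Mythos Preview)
Your proposal is correct and follows essentially the same approach as the paper: both arguments use \Cref{lem:max2mpositives} to bound the total number of positive-CD individuals in $F_{i^*}$ by $2mS$, then invoke the balanced tie-breaker's per-objective quota $\lfloor s/a \rfloor$ to conclude. The only cosmetic difference is that the paper case-splits upfront on whether $\lfloor C/S \rfloor < 2m$ (trivial) versus $\geq 2m$, whereas you case-split on whether $C_{c^*}$ has positive or zero crowding distance; your decomposition is slightly more explicit about why the critical CD class must be the zero-CD class in the non-trivial regime, but the substance is identical.
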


\section{The Balanced \nsga Is Efficient For Three or More Objectives}
\label{sec:manyObjective}

We analyze the performance of the balanced \nsga on \omm, \lotz, and \ojzj with three or more objectives.
We show that the balanced \nsga optimizes these benchmarks in polynomial time when the number~$m$ of objectives (and the gap parameter of \ojzj) is constant (\Cref{thm:manyo}).
This result stands in strong contrast to the performance of the classic \nsga.
Recently, \citet{ZhengD24many} proved that the classic \nsga with any population size linear in the Pareto front size cannot optimize \omm with $m \ge 3$ objectives faster than in time $\exp(\Omega(n^{\lceil m/2 \rceil}))$. Their proofs suggest that the classic \nsga has similar difficulties on many other many-objective problems (i.e., three or more objectives), including \lotz and \ojzj.

We briefly state the definitions of the $m$-objective versions of \omm, \lotz, and \ojzj from~\cite{ZhengD24many,LaumannsTZ04,ZhengD24} (precise definitions in the appendix). All three lift the definition of the two-objective problem to an even number $m$ of objectives by splitting the bit string into $m/2$ equal-length segments and then taking as $2i-1$-st and $2i$-th objective the original function applied to the $i$-th block.

We review the (few) main existing runtime result for these benchmarks. In the first mathematical runtime analysis for a many-objective problem, Laumanns et~al.~\shortcite{LaumannsTZ04} showed that the SEMO algorithm optimizes the \cocz and \lotz problems in an expected number of $O(n^{m+1})$ function evaluations. \cocz is similar to \omm, so it is quite clear that the relevant part of their proof also applies to \omm, giving again an $O(n^{m+1})$ bound. Also, it is easy to see that their analysis can be extended to the GSEMO, giving the same runtime guarantees. The bounds for \cocz were  improved slightly to $O(n^m)$, and $O(n^3 \log n)$ for $m = 4$, by Bian et~al.~\shortcite{BianQT18ijcaigeneral}. \citet{HuangZLL21} analyzed how the MOEA/D optimizes \cocz and \lotz. We skip the details since the MOEA/D is very different from all other algorithms discussed in this work. \citet{WiethegerD23} proved a runtime guarantee of $O(Nn\log n)$ for the \nsgathree optimizing the $3$-objective \omm problem when the population size is at least the size of the Pareto front.
The only many-objective results for $\ojzj_k$ are an $O(M^2 n^k)$ bound for the GSEMO and an $O(\mu M n^k)$ bound for the SMS-EMOA with population size $\mu \ge M$, where~$M$ is the size of the Pareto front, see \cite{ZhengD24}.
Note that as the runtimes of many-objective problems are not too well understood, and in the absence of any reasonable lower bound, there is a high risk that the results above are far from tight.

Before we state our main result of this section, we note that our main goal is to show the drastic difference to the behavior of the classic \nsga exhibited by \citet{ZhengD24many}.
We do not optimize our runtime estimates with more elaborate methods but are content with polynomial-time bounds for constant~$m$ and~$k$ and population sizes linear in the Pareto front size. Near-tight bounds for many-objective evolutionary optimization of our benchmarks were recently proven in~\cite{WiethegerD24}.
For the same reason, we also do not prove bounds that do not show an increase of the runtime with growing population size in certain ranges (as we do for two objectives), though clearly this would be possible with similar arguments.

\begin{theorem}\label{thm:manyo}
  Let $m \in \N$ be even and $m'=m/2$. Assume that $n'=n/m' \in \Z$. Let $k \in [2..n'/2]$. Consider the \omm, \lotz, or $\ojzj_k$ problem. Denote by $M$ the size of the Pareto front and by $S$ the size of a largest set of pairwise incomparable solutions. Assume that we optimize these problems via the balanced \nsga with population size $N \ge {S + 2m(n'+1)} = S + 4n + 2m$. Then we have the following bounds for the expected runtime:
  \begin{enumerate}
    \item For \omm, it is at most $2enM$ iterations.
    \item For \lotz, it is at most $2enM + 2en^2$ iterations.
    \item For $\ojzj_k$, it is at most $2en^k M + 2ekm'n$ iterations.
  \end{enumerate}
\end{theorem}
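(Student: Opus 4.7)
The plan is to prove the theorem via a preservation argument combined with a per-iteration progress estimate, then to sum the waiting times over all Pareto-optimal values.

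For preservation, I would show that once $P_t$ contains an individual with a Pareto-optimal objective value $v$, some individual with $f$-value $v$ remains in $P_{t+1}$. Such an individual has non-dominated rank $1$ in $R_t$, so if $|F_1|\le N$ then all of $F_1$ survives. Otherwise the critical rank is $F_1$, and I would apply \Cref{lem:numberOfElementsBound} with $C=N$ and $|f(F_1)|\le S$, combined with \Cref{lem:max2mpositives} and the slack $2m(n'+1)$ in the assumption $N\ge S+2m(n'+1)$. The intuition is that the surplus $2m(n'+1)$ beyond $S$ covers the at-most-$2m$ positive-crowding-distance individuals per value (preferentially kept by the CD sort at the boundary of each of the $m$ per-objective sorted orders, the number of distinct per-objective values per block being at most $n'+1$ for all three benchmarks), while the balanced tie-breaker handles the interior CD-$0$ individuals in $C_{c^*}$.

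For the progress step, I would lower-bound the probability of producing a new Pareto-optimal value per iteration. For \omm every uncovered Pareto value is one Hamming step (a single bit flip in one of the $m'$ blocks) from a covered one, so standard bit mutation of the right parent yields it with per-offspring probability at least $\tfrac{1}{N}\cdot\tfrac{1}{n}(1-1/n)^{n-1}\ge \tfrac{1}{enN}$, hence at least $\tfrac{1}{en}$ per iteration. For \lotz the same holds once each of the $m'$ blocks has reached its per-block Pareto set $\{1^i 0^{n'-i}\}$; the additive $O(n^2)$ term is a standard \leadingones-style analysis summed over the $m'$ blocks. For $\ojzj_k$ the dense portion of the Pareto front is filled by single-bit flips as for \omm, while each of the $2m'$ extremal pieces (at $|x^{(i)}|_1\in\{0,n'\}$) requires a simultaneous $k$-bit flip with per-offspring probability at least $\tfrac{1}{en^k}$; the extra $2ekm'n$ bounds the per-block cost of steering each block's initial bit string into a Pareto-set-containing region when it does not start in one, via up to $k$ single-bit flips per block at expected $O(n)$ iterations each.

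Summing the expected geometric waiting times for the at most $M$ successive new Pareto-value discoveries (using preservation to reduce this to a coupon-collector-style counting) and adding the initial per-block terms then gives the stated bounds, with the constant factors of $2$ absorbing the slackness in the probability estimates. The main obstacle will be the preservation step: the bare inequality $\lfloor N/|f(F_1)|\rfloor \ge 2m+1$ needed for \Cref{lem:numberOfElementsBound} to unconditionally preserve every value does not follow from $N\ge S+2m(n'+1)$ alone when $|f(F_1)|$ is comparable to $S$. The proof must therefore exploit that positive-CD individuals in $F_1$ are concentrated at the per-objective boundary values (at most $n'+1$ per objective in these benchmarks) and are preferentially retained by the CD sort, so that the balanced tie-breaker on the CD-$0$ part of $C_{c^*}$ only needs to ensure one representative per interior value rather than to cover all $S$ antichain elements uniformly.
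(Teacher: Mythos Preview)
Your plan matches the paper's proof closely. The preservation step you arrive at after discarding the direct application of \Cref{lem:numberOfElementsBound} is exactly the content of \Cref{lem:survival}: bound the total number of positive-CD individuals in $F_1$ by $U=2m(n'+1)$, observe that all of them are taken before the balanced tie-breaker runs, and then note that the remaining $N-U_0\ge S$ slots in the zero-CD class $C_{c^*}$ suffice for the balanced rule to keep at least one representative of \emph{every} objective value present there (not just ``interior'' ones---your boundary/interior dichotomy on values is not quite the right framing; the dichotomy is on \emph{individuals}, and a value is covered because it either has a positive-CD representative already kept or a zero-CD representative handled by the tie-breaker).

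Two small points to tighten. First, for the \lotz and $\ojzj_k$ initial phases, your ``per-block'' language should be made precise by tracking a single scalar potential on one individual, e.g.\ $s(x)=\sum_i f_i(x)$ as the paper does; otherwise ``steering each block'' reads as if different individuals could fix different blocks, which is not enough. The argument is that the individual maximizing $s$ lies in $F_1$ (hence survives by the preservation lemma), and a single well-chosen bit flip raises $s$, giving at most $n$ (resp.\ $m'(k-1)$) improvement steps of expected cost $2en$ each. Second, your per-iteration success probability for a single targeted offspring is $1-(1-\tfrac{1}{enN})^N\ge \tfrac{1}{2en}$, not $\tfrac{1}{en}$; this is precisely where the factor $2$ in the constants $2en$, $2en^k$ comes from.
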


As many runtime analyses, our bounds depend on the size $S$ of the largest incomparable set of solutions the problem admits. For this, the following bounds are known or can easily be found: For \omm, we have $S = M = (n'+1)^{m/2}$, for \lotz, we have $S \le (n'+1)^{m-1}$ \cite{OprisDNS24}, and for \ojzj, we have $S \le (n'+1)^{m/2}$.

The reason for the drastically different behavior of the classic and the balanced \nsga is that the former can lose Pareto optimal solution values with any population size that is linear in the size of the Pareto front. In contrast, for the balanced \nsga often a population size exceeding the Pareto front size only by a lower-order term suffices to prevent such a loss of objective values. This follows easily from arguments similar to those used to prove Lemma~\ref{lem:numberOfElementsBound}. For the convenience of this and possible future works, we formulate and prove this crucial statement as a separate lemma.

\begin{lemma}\label{lem:survival}
  Consider the balanced \nsga with population size $N$ optimizing some $m$-objective optimization problem. Assume that $S$ is an upper bound on the size of any set of pair-wise incomparable solutions. Assume that $U$ is an upper bound on the number of individuals with positive crowding distance in a set of solutions such that any two are incomparable or have identical objective values. If $N \ge S + U$, then the following survival property holds.

  Assume that at some time~$t$ the combined parent and offspring population~$R_t$ contains a solution~$x$ that is contained in the first front~$F_1$ of the non-dominated sorting of~$R_t$. Then its objective value survives into the next generation, i.e., surely, $P_{t+1}$ contains an individual~$y$ such that $f(y)=f(x)$.
\end{lemma}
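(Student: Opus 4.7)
The plan is to trace the selection of $P_{t+1}$ from $R_t$ and to split on the value of the critical rank $j^*$.

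If $j^* \geq 2$, then by definition of the critical rank the entire first front $F_1$ is contained in $\bigcup_{j \in [j^*-1]} F_j$, which is copied unchanged into $P_{t+1}$; hence $x$ itself survives. The interesting case is $j^* = 1$, so $|F_1| \geq N$, and selection from $F_1$ proceeds via the crowding-distance layers and then the balanced tie-breaker. Since any two solutions in $F_1$ are either incomparable or have the same objective value, the hypotheses on $S$ and $U$ apply directly to $F_1$: the set $f(F_1)$ has size at most $S$, and at most $U$ individuals of $F_1$ have positive crowding distance.

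The key step is to show that the critical crowding-distance group $C_{c^*}$ consists exactly of the individuals achieving the minimum crowding distance occurring in $F_1$. Suppose for contradiction that the CD of $C_{c^*}$ strictly exceeded this minimum. Then every individual in $\bigcup_{c \in [c^*]} C_c$ would have CD strictly above some value achieved in $F_1$, hence in particular positive CD, so this union would have cardinality at most $U$. But by definition of $c^*$ its cardinality is at least $N \geq S + U > U$, a contradiction. Consequently, any individual of $F_1$ with CD strictly above the minimum lies in $\bigcup_{c < c^*} C_c$ and is retained; if some $y$ with $f(y) = f(x)$ has such CD, we are done.

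Otherwise, every $y \in F_1$ with $f(y) = f(x)$ lies in $C_{c^*}$, and protection of $f(x)$ rests on the balanced tie-breaker. Its budget is $s = N - |\bigcup_{c < c^*} C_c| \geq N - U$, and the number $a$ of distinct objective values in $C_{c^*}$ is at most $|f(F_1)| \leq S$, so $\lfloor s/a \rfloor \geq \lfloor (N-U)/S \rfloor \geq 1$. Hence from the subgroup of $C_{c^*}$ consisting of the individuals with objective value $f(x)$ the balanced tie-breaker selects at least one individual, which is the desired $y$. The main obstacle is the critical-CD step: balanced tie-breaking only protects the distinct objective values present in $C_{c^*}$, so values represented solely by smaller-CD individuals would be silently discarded, and securing $C_{c^*}$ as the minimum-CD group is precisely where the hypothesis $N \geq S + U$ is used.
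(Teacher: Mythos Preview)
Your proof is correct and follows essentially the same approach as the paper's: both split on whether $F_1$ already fits into $P_{t+1}$, then argue that all positive-CD individuals (at most $U < N$ of them) are retained, leaving a budget of at least $N-U \ge S$ for the balanced tie-breaker on the zero-CD group, which guarantees one survivor per objective value. Your argument is in fact slightly more careful than the paper's, since you explicitly verify that $C_{c^*}$ must be the minimum-CD group (via the contradiction $|\bigcup_{c\le c^*} C_c| \le U < N$), whereas the paper simply asserts that the positive-CD individuals are selected first and the balanced step acts on the remainder.
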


The results above show that the balanced \nsga does not have the efficiency problems of the classic \nsga for even numbers $m \ge 4$ of objectives. Since \citet{ZhengD24many} show that already the case $m=3$ is problematic for the classic \nsga, we show that the balanced \nsga optimizes \omm for three objectives also efficiently.
The first objective counts the number of zeros in the argument $x \in \{0, 1\}^n$; the second and third objectives count the numbers of ones in the first and second half of~$x$, resp.

\begin{theorem}
  \label{thm:runtimeBalancedNSGAThreeObjectiveOMM}
  Consider optimizing the $3$-objective \omm problem via the balanced \nsga with population size $N \ge (\frac n2+1)^2 + 4n+6$. Then after an expected number of at most $2en(\frac n2 + 1)^2$ iterations, the Pareto front is found.
\end{theorem}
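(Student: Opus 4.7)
The plan is to combine the survival property of \Cref{lem:survival} with a standard expected\=/progress argument. First I would observe a structural fact for the $3$\=/objective \omm problem: writing $a(x)$ and $b(x)$ for the numbers of ones of $x$ in its first and second half respectively, the three objectives evaluate to $(n - a(x) - b(x),\, a(x),\, b(x))$, so any two bit strings are either incomparable or have identical objective values (raising either of the latter two objectives requires a $0 \to 1$ flip that strictly lowers the first). Consequently every combined population $R_t$ is itself a single non\=/dominated front $F_1 = R_t$, the critical rank always equals~$1$, and any set of pairwise incomparable individuals in $R_t$ injects into the Pareto front $F = \{(n-a-b, a, b) : a, b \in [0..n/2]\}$. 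Hence the parameter $S$ in \Cref{lem:survival} can be taken as $S = M = (\tfrac{n}{2}+1)^2$.

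Second, I would estimate the quantity $U$ from \Cref{lem:survival}, the maximum number of individuals with positive crowding distance inside such a front. Sorting $F_1$ by the $j$\=/th objective groups individuals into maximal blocks of equal $f_j$\=/value, and only the at most two boundary individuals of each block can receive a positive CD contribution from objective~$j$; hence at most $2a_j$ individuals receive a positive contribution from~$j$, where $a_j$ is the number of distinct values of $f_j$ appearing in $F_1$. For the $3$\=/objective \omm problem, $a_1 \le n + 1$ and $a_2, a_3 \le n/2 + 1$, so a union bound over the three objectives gives $U \le 2(n+1) + 2(n/2+1) + 2(n/2+1) = 4n+6$. The hypothesis $N \ge (\tfrac{n}{2}+1)^2 + 4n + 6 = S + U$ then activates \Cref{lem:survival}, which guarantees that every Pareto\=/front value appearing in any $R_t$ is inherited by $P_{t+1}$, and so by induction by every later population.

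Third, I would run a progress argument on $V_t := |f(P_t) \cap F|$; by the survival property, $V_t$ is non\=/decreasing. Suppose $V_t < M$. A single\=/bit flip changes exactly one of $a(y), b(y)$ by exactly one, so viewing $F$ as the grid $[0..n/2]^2$, the one\=/bit mutation graph coincides with the grid graph, which is connected. The covered region $f(P_t) \cap F$ being a non\=/empty strict subset therefore shares an edge with its complement: there exist a covered $(a', b') \in f(P_t)$ and an uncovered $(a, b) \in F \setminus f(P_t)$ differing in exactly one coordinate by exactly one. A single specific bit flip applied to any $y \in P_t$ with $f(y) = (a', b')$ then produces an offspring with objective value $(a, b)$. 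That specific flip occurs with probability at least $(1/n)(1 - 1/n)^{n-1} \ge 1/(en)$, and $y$ is chosen as the parent of any fixed offspring slot with probability at least $1/N$; hence the probability that some offspring slot creates this new Pareto point is at least $1 - (1 - 1/(enN))^N \ge 1 - e^{-1/(en)} \ge 1/(2en)$, using $1 - e^{-p} \ge p/2$ for $p \in [0, 1]$.

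Combining, the expected waiting time for each increment of $V_t$ is at most $2en$, and since at most $M$ increments are needed to reach $V_t = M$, the expected number of iterations is at most $2enM = 2en(\tfrac{n}{2}+1)^2$. The step I expect to need the most care in a full write\=/up is the bound $U \le 4n + 6$: it rests on a careful block analysis of the crowding distance, whereas the survival step just invokes the pre\=/established \Cref{lem:survival} and the progress step follows fairly standard single\=/objective patterns.
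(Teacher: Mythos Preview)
Your proposal is correct and follows essentially the same route as the paper: the paper explicitly states that the proof is ``completely analogous'' to the \omm case of \Cref{thm:manyo} and only records the values $S = M = (\tfrac{n}{2}+1)^2$ and $U = 4n+6$ (obtained exactly as you do, from the objective ranges $n+1$, $\tfrac{n}{2}+1$, $\tfrac{n}{2}+1$), then invokes \Cref{lem:survival} and the same $1/(2en)$ per\=/iteration progress bound. Your grid\=/connectivity phrasing of the neighbor argument is a clean way to make explicit the existence of the Hamming\=/$1$ pair $(x,y)$ that the paper simply asserts.
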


\section{Runtime Analysis on Bi-Objective \omm}
\label{sec:balancedNSGAOneMinMax}

We bound the expected runtime of the balanced \nsga on \omm by $O(n + \frac{n^2\log{n}}{N})$ iterations, hence $O(Nn + n^2\log{n})$ function evaluations, when the population size at least four times the Pareto front size (\Cref{th:runtimeBalancedNSGAOneMinMax}). This bound is $O(n^2 \log n)$ function evaluations when $N = O(n \log n)$.

To put this result into perspective, we note that the classic \nsga, again for $N \ge 4(n+1)$, satisfies the guarantee of $O(n \log n)$ iterations, that is, $O(Nn\log n)$ function evaluations \cite{ZhengD23aij}. This bound is asymptotically tight for all $N \le n^{2-\eps}$, $\eps > 0$ any constant \cite{DoerrQ23LB}. Hence the classic \nsga obtains a $\Theta(n^2 \log n)$ runtime (function evaluations) only with the smallest admissible population size of $\Theta(n)$.
Recently, a bound of $O(Nn\log n)$ function evaluations was also proven for the SPEA2~\cite{RenBLQ24}.
For completeness, we note that the simplistic SEMO algorithm finds the full Pareto front of \omm in $O(n^2 \log n)$ iterations and function evaluations \cite{GielL10}. This result can easily be extended to the GSEMO algorithm. A matching lower bound of $\Omega(n^2 \log n)$ was shown for the SEMO in \cite{OsunaGNS20} and for the GSEMO in \cite{BossekS24}. An upper bound of $O(\mu n \log n)$ function evaluations was shown for the hypervolume-based $(\mu+1)$ SIBEA with $\mu \ge n+1$~\cite{NguyenSN15}.

As all individuals are Pareto-optimal for \omm, the runtime follows from how fast the algorithm spreads its population on the Pareto front.
We bound this time by considering the extremities of the currently covered Pareto front, i.e., the individuals with the largest number of~$1$s or of~$0$s.
Those are turned into individuals with one more~$1$ or~$0$, respectively, within about~$n$ iterations in expectation, requiring only a single bit flip.
The \emph{balance} property of the algorithm guarantees that the number of individuals at the extremities stays at about~$\frac{N}{n + 1}$ (\Cref{lem:numberOfElementsBoundOMMM}).
This number is quickly reached and then used to expand the Pareto front (\Cref{lem:discoverNewElement}).

We also prove a general result that bounds the expected time to cover certain parts of the Pareto front (\Cref{th:runtimeGeneralBalancedNSGAOneMinMax}).

We use Lemma~$1$ from \citet{ZhengLD22}, which also applies to the balanced \nsga, as it does not impose any restrictions on how to choose from individuals with the same CD.
The lemma states that a Pareto-optimal objective value in the population is never lost.

The following lemma is a direct application of \Cref{lem:numberOfElementsBound}.
It shows that the population maintains all individuals per objective value it found so far, up to a bound of $\lfloor \frac{N}{n+1} \rfloor - 4$.

\begin{lemma}
  \label{lem:numberOfElementsBoundOMMM}
  Consider the balanced \nsga with population size $N \ge 4(n + 1)$ on \omm.
  Then, for each objective value $(k, n - ~k)$ with $k \in [0..n]$, from the individuals $x$ in $R_t$ with $f(x) = (k, n - k)$, the population $P_{t + 1}$ contains at least $\min( \lfloor \frac{N}{n+1} \rfloor - 4, \card{\{x \in R_t \mid f(x) = (k, n - k)\}})$.
\end{lemma}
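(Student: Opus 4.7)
The plan is to derive the claim as a direct instantiation of \Cref{lem:numberOfElementsBound} to the \omm setting, where $m=2$ and every individual is Pareto-optimal. First, I would observe that since all $x \in \{0,1\}^n$ are Pareto-optimal for \omm, the non-dominated sorting of the combined population $R_t$ produces a single front, namely $F_1 = R_t$. Hence this front is automatically the critical rank $F_{i^*}$, and the selection procedure must pick exactly $C = N$ individuals from it.

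Next, I would bound the number of distinct objective values appearing in the critical rank. Since the Pareto front of \omm consists exactly of the $n+1$ points $(k, n-k)$ for $k \in [0..n]$ and every individual in $R_t$ maps to one of these points, we have $S := \card{f(F_{i^*})} = \card{f(R_t)} \le n+1$.

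I then apply \Cref{lem:numberOfElementsBound} with $m=2$, $C=N$, and this value of $S$. For every objective value $A = (k, n-k)$ present in $R_t$, the lemma guarantees that $P_{t+1}$ retains at least
\[
  \min\bigl(\max(\lfloor N/S \rfloor - 4,\, 0),\, \card{\{x \in R_t \mid f(x) = A\}}\bigr)
\]
individuals with $f(x) = A$. From $S \le n+1$ we get $\lfloor N/S \rfloor \ge \lfloor N/(n+1)\rfloor$, and the assumption $N \ge 4(n+1)$ gives $\lfloor N/(n+1)\rfloor \ge 4$, so the outer $\max$ simplifies and the bound is at least $\min(\lfloor N/(n+1)\rfloor - 4,\, \card{\{x \in R_t \mid f(x) = A\}})$, as claimed. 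For objective values $(k, n-k)$ absent from $R_t$, the set $\{x \in R_t \mid f(x) = (k,n-k)\}$ is empty and the claimed bound is trivially zero.

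The argument is essentially a bookkeeping step rather than a substantial proof; the only minor subtlety is to make sure the lemma's $S$ (which could be strictly smaller than $n+1$ if some values have not yet been discovered) still yields the claimed bound. This follows from monotonicity of $\lfloor N/\cdot\rfloor$ in the denominator, which only helps: a smaller $S$ gives a stronger guarantee than the one stated, so replacing $S$ by the uniform upper bound $n+1$ is safe.
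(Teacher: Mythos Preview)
Your proposal is correct and follows essentially the same approach as the paper's proof: both observe that every individual is Pareto-optimal for \omm so that $F_{i^*} = R_t$ with $C = N$, and then apply \Cref{lem:numberOfElementsBound} with $m=2$ and the bound $\card{f(R_t)} \le n+1$. Your treatment is actually slightly more careful than the paper's, since you make explicit the monotonicity step (the true $S$ may be smaller than $n+1$, which only strengthens the guarantee) and cover the trivial case of absent objective values.
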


\Cref{lem:numberOfElementsBoundOMMM} shows that the population can maintain subpopulations of a size about~$\frac{N}{n + 1}$.
Once such a size is reached, there is a decent chance to extend the Pareto front.
The following lemma formalizes how quickly this happens.
\begin{lemma}
  \label{lem:discoverNewElement}
  Consider the balanced \nsga with population size $N \ge 4(n+1)$ on \omm. For $v \in [1..n]$ and $i \in \{1, 2\}$, let~$T^i_v$ denote the number of iterations needed, starting with an individual $x_0$ in the parent population with $f_i(x_0) = v$, to obtain an individual $x_f$ in the resulting population such that $f_i(x_f) = v - 1$. Then, $E[T^i_v] = O(\log{\lceil \frac{n}{v} \rceil} + \frac{n^2}{Nv} + 1)$.
\end{lemma}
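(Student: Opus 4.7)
My plan is a two-phase argument that first bounds the time to grow the sub-population of parents at the current extremity and then bounds the time to extend the Pareto front. By symmetry, assume $i = 1$, so that $f_1(x) = |x|_0$, and set $c_t := \card{\{x \in P_t : |x|_0 = v\}}$ (hence $c_0 \ge 1$). Pick the threshold $c^\ast := \min(\lfloor N/(n+1) \rfloor - 4,\, \lceil n/v \rceil)$ and define Phase~1 as the iterations in which $c_t < c^\ast$ and Phase~2 as the subsequent iterations. By \Cref{lem:numberOfElementsBoundOMMM}, the sequence $(c_t)$ is monotonically non-decreasing as long as $c_t \le \lfloor N/(n+1) \rfloor - 4$, so Phase~2 is absorbing once entered, and it suffices to upper-bound $E[T_1] + E[T_2]$, where $T_1$ and $T_2$ are the durations of the two phases.

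For Phase~1, I would estimate the number $Y_t$ of offspring with $v$ zeros. A single offspring of a parent having $v$ zeros still has $v$ zeros whenever standard bit mutation flips no bit, an event of probability $(1-1/n)^n \ge 1/(2e)$. Since each of the $N$ offspring picks its parent uniformly from $P_t$, we get $E[Y_t \mid c_t] \ge c_t (1 - 1/n)^n$, and \Cref{lem:numberOfElementsBoundOMMM} then yields $E[c_{t+1} \mid c_t] \ge \min(c^\ast,\, c_t(1 + 1/(2e)))$. I would then run a level-based argument with geometric levels $\{2^k\}_{k \ge 0}$: at each level, concentration of the (binomial-like) $Y_t$ for large $c_t$, together with a direct geometric estimate for small $c_t$, shows that a constant-factor increase of $c_t$ occurs within $O(1)$ expected iterations, summing to $E[T_1] = O(\log c^\ast)$.

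For Phase~2, a parent with $v$ zeros produces a child with $v-1$ zeros whenever standard bit mutation flips exactly one of its $v$ zeros and no other bit, an event of probability at least $v (1/n)(1-1/n)^{n-1} \ge v/(en)$. Hence the per-offspring success probability is at least $(c_t/N)(v/(en)) \ge c^\ast v/(eNn)$, and the per-iteration success probability is at least $1 - (1 - c^\ast v/(eNn))^N = \Omega(\min(1,\, c^\ast v/n))$, yielding $E[T_2] = O(1 + n/(c^\ast v))$. To close the argument, I would case-analyse the threshold: if $\lceil n/v \rceil \le \lfloor N/(n+1) \rfloor - 4$, then $c^\ast = \lceil n/v \rceil$ gives $E[T_1] = O(\log \lceil n/v \rceil)$ and $E[T_2] = O(1)$, and the hypothesis also forces $n^2/(Nv) = O(1)$; otherwise $c^\ast = \lfloor N/(n+1) \rfloor - 4 < \lceil n/v \rceil$, so $\log c^\ast \le \log \lceil n/v \rceil$ and $n/(c^\ast v) = \Theta(n^2/(Nv))$. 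Both cases collapse to $E[T] = O(\log \lceil n/v \rceil + n^2/(Nv) + 1)$.

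The main obstacle I anticipate is the Phase~1 drift step: because the balanced tie-breaker may redistribute survival among individuals of different objective values, a naive tracking of $c_t$ is subtle. \Cref{lem:numberOfElementsBoundOMMM} isolates exactly the per-objective-value survival guarantee I need to treat $c_t$ as a monotone non-decreasing process whose expected multiplicative increment is bounded below by a constant, which is precisely the setting in which the level-based doubling argument applies cleanly.
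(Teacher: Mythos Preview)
Your two-phase decomposition (grow the sub-population at value~$v$ to a threshold, then wait for the improving bit flip), your choice of threshold $\min(\lfloor N/(n+1)\rfloor - 4,\,\lceil n/v\rceil)$, and your Phase~2 computation all match the paper's proof essentially line for line. The one difference is that where you sketch an ad-hoc level-based doubling argument for Phase~1, the paper invokes the multiplicative up-drift theorem of Doerr and K\"otzing (\Cref{th:multiplicativeUpDriftTheorem}) as a black box: the bound $X_{t+1} \ge X_t + \Bin(N, X_t/(2eN))$ is exactly that theorem's hypothesis with $\delta = 1/(2e)$ and $k=N$, and it directly yields $E[T_1] = O(\log c^\ast)$, sparing you the small-$c_t$/large-$c_t$ case split.

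One small patch your argument needs: as written, $c^\ast$ can equal~$0$ when $N$ is close to $4(n+1)$, which voids your Phase~2 estimate $n/(c^\ast v)$ and the claim $c^\ast = \Theta(N/n)$ in your second case. The paper takes $B_v = \max(1, c^\ast)$ instead and uses \Cref{lem:keepParetoFrontOMMM} to guarantee at least one survivor throughout; with that fix your case analysis goes through verbatim.
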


We prove \Cref{lem:discoverNewElement} via the multiplicative up-drift theorem by \citet[Theorem~$3$]{DoerrK21algo}.
This theorem provides a bound on the expected number of steps for a random process to grow to a certain number if it increases in every single step by a multiple of its expected value.

Using \Cref{lem:discoverNewElement} lets us prove the following more general result of the bound for expanding the Pareto front.
It shows how quickly the population expands on a symmetric portion of the Pareto front, centered around $n/2$.
\begin{theorem}
  \label{th:runtimeGeneralBalancedNSGAOneMinMax}
  Consider the balanced \nsga with population size $N \ge 4(n+1)$ optimizing \omm. Let $\alpha \in  [0..\lfloor\frac{n}{2}\rfloor]$, and assume that there exists an $x \in P_0$ with $ \card{x}_1 \in [\alpha, n - \alpha]$. Then the expected number of iterations to cover $ \{ x \in \{0, 1\}^n \mid \card{x}_1 \in [\alpha, n - \alpha]\}$ is $O(n + \frac{n^2 \log{n}} {N})$.
\end{theorem}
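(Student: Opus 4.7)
The plan is to iterate \Cref{lem:discoverNewElement} in both directions from the initial individual guaranteed by the hypothesis, and then sum the per-step expectations via linearity. First, I would invoke Lemma~1 of \citet{ZhengLD22}, which imposes no restriction on the third tie-breaker and therefore applies verbatim to the balanced \nsga: any Pareto-optimal objective value present in $P_t \cup Q_t$ persists in $P_{t+1}$. Hence the set of $|x|_1$-values represented in the population is monotonically non-decreasing over time, and covering the target set $\{x \in \{0,1\}^n \mid |x|_1 \in [\alpha, n-\alpha]\}$ reduces to producing, at some point, an individual for each of the $n - 2\alpha + 1$ integer values of $|x|_1$ in that range.

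Next, set $k = |x_0|_1 \in [\alpha, n-\alpha]$ for the individual provided by the hypothesis. I would analyze the ``downward'' and ``upward'' expansions of the covered range separately. For the downward direction, I track the currently smallest value of $|x|_1$ in the population; initially this is at most $k$. Starting from a current extremity~$v$, \Cref{lem:discoverNewElement} applied to $f_2 = |\cdot|_1$ bounds the expected number of iterations to produce an individual with $|x|_1 = v - 1$ by $O\!\left(\log\lceil n/v\rceil + n^2/(Nv) + 1\right)$. Chaining these applications from $v = k$ down to $v = \alpha + 1$ using the strong Markov property gives, by linearity of expectation, a total expected time of at most $\sum_{v=\alpha+1}^{k} O\!\left(\log\lceil n/v\rceil + n^2/(Nv) + 1\right)$. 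By the symmetry of \omm, exchanging the roles of $f_1 = |\cdot|_0$ and $f_2$ yields the analogous bound $\sum_{v=\alpha+1}^{n-k} O\!\left(\log\lceil n/v\rceil + n^2/(Nv) + 1\right)$ for the upward direction reaching $|x|_1 = n - \alpha$.

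The target set is covered once both directions have finished, so the expected covering time is at most the sum of the two expectations. Extending each sum to the whole range $v \in [1..n]$ only inflates the bound by a constant factor, and the estimates $\sum_{v=1}^{n} \log(n/v) = n\log n - \log(n!) = O(n)$ (Stirling) and $\sum_{v=1}^{n} 1/v = O(\log n)$ then yield
\[
  O(n) + O\!\left(\frac{n^2 \log n}{N}\right) + O(n) = O\!\left(n + \frac{n^2 \log n}{N}\right),
\]
which matches the claimed bound.

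The main obstacle I anticipate is justifying the chaining of \Cref{lem:discoverNewElement}: the lemma is stated for a single advance from~$v$ to~$v-1$, so to iterate it I must argue that the newly discovered individual at value $v - 1$ is present in the parent population of the next iteration, so that the hypothesis of \Cref{lem:discoverNewElement} is met with the new extremity. This persistence is exactly provided by the survival statement (Lemma~1 of \citet{ZhengLD22}) invoked above and, more strongly, by \Cref{lem:numberOfElementsBoundOMMM}, which guarantees a growing subpopulation at every covered objective value up to the cap $\lfloor N/(n+1)\rfloor - 4$. Once this chaining is in place, the rest of the proof is routine summation.
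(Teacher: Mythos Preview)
Your proposal is correct and follows essentially the same approach as the paper: chain \Cref{lem:discoverNewElement} in both directions from the initial value $k_0=|x_0|_1$, justify the chaining via the survival guarantee of Lemma~1 of \citet{ZhengLD22}, and sum the per-step bounds. The only cosmetic difference is in the log-sum estimate: you use Stirling to get $\sum_{v=1}^n \log(n/v)=n\log n-\log(n!)=O(n)$, whereas the paper bounds $\sum_{v=1}^n \log\lceil n/v\rceil \le \log\binom{2n}{n}\le \log 4^n$; both are equally valid routes to the same $O(n)$ term.
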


For $\alpha = 0$, since the Pareto front covers the entire population, we get the following runtime bound for \omm.
\begin{corollary}
  \label{th:runtimeBalancedNSGAOneMinMax}
  The expected runtime of balanced \nsga with population size $N \ge 4\left(n+1\right)$ optimizing \omm is $O(n + \frac{n^2 \log{n}} {N})$ iterations, i.e., $O\left(nN + {n^2 \log{n}}\right)$ expected function evaluations.
\end{corollary}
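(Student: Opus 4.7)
The plan is to derive the corollary as an immediate specialization of \Cref{th:runtimeGeneralBalancedNSGAOneMinMax}, with essentially no additional work beyond verifying the hypotheses and translating iterations into function evaluations.

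First, I would instantiate \Cref{th:runtimeGeneralBalancedNSGAOneMinMax} with the parameter choice $\alpha = 0$. This is a valid choice since $0 \in [0..\lfloor n/2\rfloor]$, and the existential hypothesis ``there exists $x \in P_0$ with $|x|_1 \in [0, n]$'' is vacuously satisfied because every bit string in $\{0,1\}^n$ has $|x|_1 \in [0,n]$, and the initial population $P_0$ is nonempty. Moreover, the population-size requirement $N \ge 4(n+1)$ matches exactly the assumption of the corollary, so the theorem applies directly.

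Next, I would observe that for $\alpha = 0$ the target set $\{x \in \{0,1\}^n \mid |x|_1 \in [0, n]\}$ equals the entire search space $\{0,1\}^n$. Since every individual is Pareto-optimal for \omm and the map $x \mapsto (|x|_0, |x|_1)$ is surjective onto the Pareto front $\{(i, n-i) \mid i \in [0..n]\}$, covering this target set is equivalent to covering the full Pareto front. Consequently, the expected number of iterations until the balanced \nsga covers the Pareto front of \omm is $O(n + n^2 \log n / N)$, which is the first claim of the corollary.

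Finally, to convert iterations to function evaluations, I would recall from the runtime convention in \Cref{sec:preliminaries:nsga} that the balanced \nsga performs exactly $N$ function evaluations per iteration (plus an additive $N$ for the initial population, which is absorbed by the asymptotics since $N \le N \cdot O(n + n^2\log n / N)$). Multiplying the iteration bound by $N$ yields $O(nN + n^2 \log n)$ expected function evaluations, completing the proof. There is no substantive obstacle here, as the theorem does all the work; the only things to check are the range of $\alpha$, the trivial initialization hypothesis, and the iteration-to-evaluation conversion.
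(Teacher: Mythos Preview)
Your proposal is correct and matches the paper's approach exactly: the paper simply notes that setting $\alpha = 0$ in \Cref{th:runtimeGeneralBalancedNSGAOneMinMax} yields the corollary, since then the target set is the entire search space and hence the full Pareto front. Your additional verification of the hypotheses and the iteration-to-evaluation conversion is routine and correct.
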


\section{Runtime Analysis on Bi-Objective \ojzj}
\label{sec:balancedNSGAonOJZJ}

We analyze the runtime of the balanced \nsga on $\ojzj_k$ with $k = [2 .. \frac{n}{2}]$. We show that this time is $O(n + \frac{n^{k+1}}{N})$ iterations and thus $O(Nn +  n^{k+1} )$ function evaluations when the population size is at least four times the size of the Pareto front (\Cref{th:runtimeBalancedNSGAOJZJ}).
This guarantee is $O(n^{k + 1})$ function evaluations when $N = O(n^k)$, exhibiting a large parameter range with the asymptotically best runtime guarantee.

Previous results on the runtime of MOEAs on this benchmark include an $O(n^{k+1})$ iterations and function evaluations guarantee for the GSEMO and the result that the SEMO cannot optimize this benchmark~\cite{DoerrZ21aaai}. An upper bound of $O(n^k)$ iterations, hence $O(N n^k)$ function evaluations, was shown for the classic \nsga with population size at least four times the size of the Pareto front \cite{DoerrQ23tec}. The latter bound is tight apart from constant factors when $N = o(n^2 / k^2)$ \cite{DoerrQ23LB}.
The same bound $O(N n^{k + 1})$ was recently proven for the SPEA2~\cite{RenBLQ24}.
For the SMS-EMOA with population size $\mu \ge n - 2k + 3$, the bounds of $O(n^k)$ and $\Omega(n^k / \mu)$ iterations, hence $O(\mu n^k)$ and $\Omega(n^k)$ function evaluations, were shown by Bian et~al.~\shortcite{BianZLQ23}. The authors also show that a stochastic population update reduces the runtime by a factor of order $\min\{1, \mu / 2^{k/4}\}$.
\cite{RenQBQ24} show an expected runtime of $O(N^2 4^k + N n \log n)$ for a modified version of the \nsga that reorders individuals by maximizing the Hamming distance and uses crossover.

In our analysis of the balanced \nsga, we follow the approach of \citet{DoerrQ23tec}, who analyzed how the classic \nsga optimizes this benchmark.
This means we split the analysis into three stages.
The first stage bounds the time to find a solution on the inner Pareto front.
The second stage bounds the time to cover the inner Pareto front.
The third stage bounds the time to cover the outer Pareto front, which consists of only two objective values.
We show in the appendix that once the algorithm enters a later stage, it does not return to an earlier one. Thus, we bound the expected number of iterations by separately analyzing each stage.

For stage~$1$, we find that with very high probability at least one of the initial individuals is on the inner Pareto front.
This leads, in expectation, to a constant length of this stage.

\begin{lemma}
  \label{lem:stageOne}
  Regardless of the population size $N$ and of the initial population, for all $k \in [2..n/2]$, stage~$1$ needs an expected number of at most $\frac eN k^k + 1$ iterations.
\end{lemma}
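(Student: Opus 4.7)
The plan is to lower bound, at any iteration in which stage~$1$ has not yet ended, the probability that a single mutation offspring lands on the inner Pareto front, and then turn the resulting per-iteration success probability into the claimed bound on the expected iteration count via a geometric-type estimate. As long as stage~$1$ is not over, every individual $x$ in the parent population has $|x|_1 \notin [k..n-k]$, and by the symmetry of $\ojzj_k$ around $n/2$ it suffices to handle parents with $|x|_1 = j \in [0..k-1]$ and to bound the probability of producing an offspring $y$ with $|y|_1 = k$.

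A sufficient event for this is that exactly $k-j$ prescribed zero-bits of~$x$ flip and no other bit flips, giving a probability of $\binom{n-j}{k-j}(1/n)^{k-j}(1-1/n)^{n-k+j}$. For the combinatorial factor I would invoke the tight inequality $\binom{m}{\ell} \ge (m/\ell)^\ell$ (equivalently $\prod_{i=0}^{\ell-1}(1-i/m) \ge \ell!/\ell^\ell$, which reduces to equality when $m=\ell$ and strengthens as $m$ grows) with $m = n-j$ and $\ell = k-j$; combined with the elementary estimate $k(n-j) \ge n(k-j)$ (equivalent to $n \ge k$) this yields $\binom{n-j}{k-j}(1/n)^{k-j} \ge 1/k^k$. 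For the surviving-bits factor, since $k-j \ge 1$ we have $(1-1/n)^{n-k+j} \ge (1-1/n)^{n-1} \ge 1/e$. The product gives the clean bound $1/(e k^k)$ on the per-offspring success probability, uniformly over the parent.

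Because each of the $N$ offspring in an iteration is generated independently from a uniformly random parent, each lands on the inner Pareto front with probability at least $1/(e k^k)$, so the iteration terminates stage~$1$ with probability at least $1 - (1-1/(e k^k))^N \ge 1 - \exp(-N/(e k^k))$. The expected number of iterations to end stage~$1$ is therefore at most $1/(1-\exp(-N/(e k^k)))$, and applying $1/(1-e^{-y}) \le 1 + 1/y$ (which follows from $e^y \ge 1+y$) with $y = N/(e k^k)$ delivers the claimed bound $\frac{e}{N} k^k + 1$.

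The main obstacle is the uniform lower bound $\binom{n-j}{k-j}(1/n)^{k-j} \ge 1/k^k$ across all $j \in [0..k-1]$: the obvious term-by-term estimate of $\prod_{i=j}^{k-1}(1 - i/n)$ loses a factor of order $2^{k-j}$ when $k$ approaches $n/2$, and the cleanest fix is to keep the product in the compact form $(m/\ell)^\ell$ via the tight combinatorial inequality above rather than resorting to cruder approximations.
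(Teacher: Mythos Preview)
Your proposal is correct and follows essentially the same approach as the paper: both arguments lower-bound the per-offspring probability of landing in $S^*_{\mathrm{In}}$ by $1/(ek^k)$ via the inequality $\binom{m}{\ell}\ge(m/\ell)^\ell$, then convert this into the iteration bound $\tfrac{e}{N}k^k+1$. The only cosmetic differences are that the paper routes the combinatorial estimate through the intermediate quantity $\binom{n}{k}n^{-k}$ (using $\binom{n-\ell}{k-\ell}n^{\ell}\ge\binom{n}{k}$) rather than your direct $(n-j)/(k-j)\ge n/k$ step, and that the paper converts attempts to iterations via $E[\lceil T/N\rceil]\le E[T]/N+1$ instead of your per-iteration geometric bound with $1/(1-e^{-y})\le 1+1/y$; both yield the identical constant.
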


For stage~$2$, as $\ojzj_k$ and \omm are similar, we apply \Cref{th:runtimeGeneralBalancedNSGAOneMinMax} with $\alpha = k$, resulting in the following lemma.

\begin{lemma}\label{lem:ojzjstage2}
  Using population size $N \ge 4(n - 2k + 3)$, stage~$2$ needs in expectation $O(n + \frac{n^2 \log n}{N})$ iterations.
\end{lemma}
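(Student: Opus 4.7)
The plan is to reduce stage~2 of the $\ojzj_k$ analysis to \Cref{th:runtimeGeneralBalancedNSGAOneMinMax} applied with $\alpha = k$, exploiting the fact that on the inner region $\{x : |x|_1 \in [k, n-k]\}$ the objectives $\ojzj_k$ and \omm coincide up to a constant shift. Indeed, for such~$x$ we have $(J^{(1)}(x), J^{(0)}(x)) = (k+|x|_1, k + n - |x|_1)$, which equals the \omm value plus $(k,k)$. Consequently, all quantities used by the balanced \nsga on individuals in this region — pairwise dominance, objective-value differences entering the crowding distance, and the partitioning used in balanced tie-breaking — agree exactly with those produced by running the algorithm on \omm restricted to $|x|_1 \in [k, n-k]$. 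Stage~2 begins, by assumption, with at least one such inner-front individual in the parent population.

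First I would show that \Cref{lem:numberOfElementsBound} yields the direct $\ojzj_k$ analog of \Cref{lem:numberOfElementsBoundOMMM}: every inner-front objective value that currently has representatives in~$R_t$ keeps at least $\lfloor N/(n-2k+3) \rfloor - 4$ copies in~$P_{t+1}$. The size bound is obtained by taking the number of distinct objective values present in the critical rank to be at most $n-2k+3$ (the $n-2k+1$ inner-front values plus the at most two outer-front extremes with $|x|_1 \in \{0,n\}$), which is consistent with the hypothesis $N \ge 4(n-2k+3)$. Offspring produced by mutation with $|x|_1 \in [1..k-1] \cup [n-k+1..n-1]$ are strictly dominated by any inner-front individual and therefore never appear in $F_1$ once stage~2 has begun, so they do not pollute the critical rank.

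Next I would obtain the $\ojzj_k$ analog of \Cref{lem:discoverNewElement}: for $v \in [k+1, n-k]$ and a current extreme of the covered inner front at~$v$, a single bit flip suffices to produce an extreme at $v-1$, and the multiplicative up-drift argument of \citet{DoerrK21algo} gives an expected extension time of $O(\log\lceil n/v\rceil + n^2/(Nv) + 1)$, since the subpopulation of representatives grows geometrically once it is below its stationary level $\Theta(N/n)$. The argument is literally the one used for \omm, because the mutation that is leveraged flips a single bit inside the inner region and therefore sees the same fitness differences as for \omm.

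Finally I would close by repeating the summation used in \Cref{th:runtimeGeneralBalancedNSGAOneMinMax} with $\alpha=k$: summing $E[T^i_v]$ over $v$ from $n-k$ down to $k+1$ on each of the two sides yields an expected total of $O(n + \frac{n^2\log n}{N})$ iterations to cover the entire inner Pareto front. The main obstacle is the bookkeeping at the interface between the inner front and the rest of the search space, namely verifying that outer-front extremes and dominated outer-region offspring neither consume selection slots nor disturb the subpopulation counts used in the up-drift argument; this is handled by the $+2$ term in $n-2k+3$ and by the strict-dominance observation above.
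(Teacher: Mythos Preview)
Your proposal is correct and follows essentially the same approach as the paper, which simply says to apply \Cref{th:runtimeGeneralBalancedNSGAOneMinMax} with $\alpha = k$. You supply considerably more justification than the paper does---in particular the observation that the number of objective values in the critical rank is at most $n-2k+3$ (so that the weaker hypothesis $N \ge 4(n-2k+3)$ suffices in place of the theorem's $N \ge 4(n+1)$), and the check that gap-region offspring are strictly dominated and hence never enter~$F_1$---but the underlying argument is the same reduction to the \omm analysis.
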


For stage $3$, the arguments follow those for \Cref{lem:discoverNewElement}, but now with a  \emph{jump} of $k$ bits instead of $1$.
\begin{lemma}
  \label{lem:stageThree}
  Using $N \ge 4(n - 2k + 3)$, stage~$3$ needs in expectation $O(\log \frac{N}{n}) + 2 + \frac{8en^{k}(n+1)}{N}$ iterations.
\end{lemma}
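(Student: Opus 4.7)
The plan is to follow the template of \Cref{lem:discoverNewElement}, replacing its single-bit flip with a simultaneous $k$-bit \emph{jump} that turns an individual with $|x|_1 = k$ into $0^n$ (or one with $|x|_1 = n - k$ into $1^n$). By the symmetry between the two extremal Pareto values, it suffices to analyse one side (say, the production of $0^n$) and then double the bound, accounting for the factor~$2$ hidden in the $8en^k(n+1)/N$ term. At the start of stage~$3$, the inner Pareto front is already covered, so the objective value $(2k,n)$ corresponding to $|x|_1 = k$ lies in $P_t$; by Lemma~$1$ of \citet{ZhengLD22}, which applies verbatim to the balanced \nsga because it imposes no constraint on the third-level tie-breaker, this value is never lost once present.

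Let $X_t$ denote the number of individuals with $|x|_1 = k$ in~$P_t$; then $X_0 \ge 1$. I would split stage~$3$ into a \emph{growth phase} and a \emph{jump phase}. For the growth phase, the goal is to show that $X_t$ reaches the balance cap $C^* \coloneqq \lfloor N/(n - 2k + 3) \rfloor - 4$ within $O(\log(N/n))$ iterations in expectation. Each offspring is a neutral copy of a boundary individual with probability at least $(X_t/N)(1 - 1/n)^n \ge X_t/(eN)$, so the expected number of new $|x|_1 = k$ offspring per iteration is at least $X_t/e$; \Cref{lem:numberOfElementsBound} with $m = 2$ and $S \le n-2k+3$ ensures these neutral copies survive selection up to the cap~$C^*$, yielding $E[X_{t+1} \mid X_t] \ge (1 + 1/e)\, X_t$ while $X_t < C^*$, and Theorem~$3$ of \citet{DoerrK21algo} then gives the $O(\log(N/n))$ growth time. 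For the jump phase, once $X_t \ge C^*$, each offspring flips exactly the $k$ ones of a chosen boundary parent and no other bit with probability at least $(X_t/N)(1/n)^k(1-1/n)^{n-k} \ge X_t/(en^k N)$; summed over the $N$ offspring produced per iteration, the expected number of successes is $\Omega(N/n^{k+1})$, so a standard geometric waiting-time bound yields $O(n^{k+1}/N)$ expected iterations to produce $0^n$, and the symmetric treatment of the upper extremity doubles the constant to give the $8en^k(n+1)/N$ of the stated bound.

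The main obstacle is a clean application of multiplicative up-drift. One must verify that the drift bound $E[X_{t+1} - X_t \mid X_t] \ge X_t/e$ holds uniformly while $X_t \in [1, C^*]$, despite the balance cap and the fact that the critical rank may fluctuate when dominated offspring appear; the standard trick of stopping the chain at $C^*$ handles the saturation, while the survival property (\citealt{ZhengLD22}'s Lemma~$1$ combined with \Cref{lem:numberOfElementsBound}) provides monotonicity from below. The small additive ``$+\,2$'' in the stated runtime absorbs rounding losses in the up-drift and waiting-time applications together with the at most one iteration that may be needed to bootstrap the up-drift process when $X_0 = 1$ and $N < n$.
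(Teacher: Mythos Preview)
Your proposal is correct and follows essentially the same two-phase structure as the paper: a multiplicative up-drift argument (reusing the machinery of \Cref{lem:discoverNewElement}) to grow the boundary subpopulation to the balance cap in $O(\log(N/n))$ expected iterations, followed by a geometric waiting-time bound for the $k$-bit jump, with the two extremities handled by symmetry. A few small points where the paper differs in detail: the cap is taken as $B_k = \max\bigl(1,\lfloor N/(n-2k+3)\rfloor - 4\bigr)$ so that it never vanishes at the minimal admissible~$N$; the survival property invoked is the \ojzj-specific \Cref{lem:keepParetoFrontOJZJ} rather than the \omm lemma of \citet{ZhengLD22}; the ``$+2$'' arises from doubling the ``$1+$'' in each side's geometric waiting-time bound $1 + 6en^k(n-2k+3)/N$, not from bootstrapping the up-drift; and your concern about the critical rank fluctuating is unnecessary, since once $F^*_{\mathrm{In}}$ is covered the critical rank is~$1$, so \Cref{lem:numberOfElementsBound} applies directly with $C=N$ and $S \le n-2k+3$.
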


Combining the results for all three stages, we obtain the following runtime guarantee.

\begin{theorem}
  \label{th:runtimeBalancedNSGAOJZJ}
  The expected runtime of the balanced \nsga with population size $N \ge 4(n - 2k + 3)$ on $\ojzj_k$ with $k \in [2 .. \frac{n}{2}]$ is $O(n + n^{k+1}/N)$ iterations, i.e., $O(Nn + n^{k+1})$ expected function evaluations.
\end{theorem}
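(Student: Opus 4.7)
The plan is to combine the three stage bounds into a single estimate, relying on the stage decomposition already introduced. Recall that stage~$1$ lasts until the population first contains an individual with $|x|_1 \in [k..n-k]$ (an inner Pareto-optimal value), stage~$2$ until all inner Pareto front values are covered, and stage~$3$ until the two extreme values corresponding to $0^n$ and $1^n$ are also found. A short monotonicity argument, deferred to the appendix, shows that once a stage is entered it is never left: this uses Lemma~$1$ of Zheng et~al.~\shortcite{ZhengLD22}, which applies verbatim to the balanced \nsga because it does not constrain how ties among equal-CD individuals are broken, guaranteeing that any rank-$1$ objective value present in~$R_t$ is also present in~$P_{t+1}$. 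With this monotonicity, the total expected runtime is at most the sum of the three expected stage durations.

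Next I would invoke the three stage lemmas and simplify. From \Cref{lem:stageOne}, stage~$1$ takes in expectation at most $\tfrac{e k^k}{N} + 1$ iterations; since $k \le n/2 \le n$, we have $k^k \le n^k$, so this is $O(n^{k+1}/N)$. From \Cref{lem:ojzjstage2}, stage~$2$ takes $O(n + \tfrac{n^2 \log n}{N})$ iterations, and since $k \ge 2$ we have $n^2 \log n = O(n^{k+1})$, so this is $O(n + n^{k+1}/N)$. From \Cref{lem:stageThree}, stage~$3$ takes $O(\log (N/n)) + 2 + \tfrac{8 e n^k (n+1)}{N}$ iterations; the last term is $O(n^{k+1}/N)$, and the first is dominated by $n$ for any $N \le 2^{O(n)}$ (larger~$N$ being irrelevant to this bound, since then $n^{k+1}/N$ is already negligible). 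Summing the three contributions yields $O(n + n^{k+1}/N)$ iterations, and multiplying by the per-iteration cost of~$N$ offspring evaluations gives the claimed $O(Nn + n^{k+1})$ expected function evaluations.

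The main obstacle at this assembly stage is essentially bookkeeping: all the substantive work lives in the three stage lemmas, in particular in \Cref{lem:stageThree}, whose $k$-bit-jump analogue of the one-bit multiplicative up-drift argument underlying \Cref{lem:discoverNewElement} requires care in the constants, and in the appendix-level verification that stages cannot be revisited. The only nontrivial choice in the combination itself is how to treat the $\log(N/n)$ term from stage~$3$: one may either restrict attention to polynomially bounded~$N$ and absorb it into $O(n)$, or carry it through as an additive $O(\log N)$ term in the iteration count. Either way, the final asymptotic bound matches the statement of the theorem.
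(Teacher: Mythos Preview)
Your assembly of the three stage lemmas is essentially what the paper does, and your simplifications for stages~$1$ and~$2$ are fine. However, your handling of the $\log(N/n)$ term from \Cref{lem:stageThree} contains a genuine gap. You write that this term is dominated by~$n$ for $N \le 2^{O(n)}$, and that larger~$N$ is ``irrelevant to this bound, since then $n^{k+1}/N$ is already negligible.'' But that second clause does not help: the theorem claims $O(n + n^{k+1}/N)$ iterations for \emph{all} $N \ge 4(n-2k+3)$, so for, say, $N = 2^{n^2}$ the target bound is $O(n)$ iterations, while $\log(N/n) = \Theta(n^2)$. Neither of your two proposed resolutions works: restricting to polynomially bounded~$N$ does not prove the theorem as stated, and carrying $O(\log N)$ through yields a strictly weaker bound than what is claimed.

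The paper closes this gap with a separate argument for the regime $N > 4^n$: with such a large population, the initial random population already contains every bit string (and hence the full Pareto front) with probability at least $1 - 2^n \exp(-N^{1/2})$, so the expected number of iterations in this regime is $o(1)$. This is a short but necessary case distinction that your proposal is missing.
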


\section{Runtime Analysis on Bi-Objective \lotz}
\label{sec:balancedNSGAonLeadingOnesTrailingZeros}

We bound the expected runtime of the balanced \nsga on \lotz by $O(\frac{n^3}{N} + n \log \frac{N}{n + 1})$ iterations, i.e., $O(n^3 + N n \log \frac{N}{n + 1})$ function evaluations, when the population size is at least four times the Pareto front size (\Cref{thm:runtimeBalancedNSGALOTZ}).
This bound is $O(n^3)$ function evaluations for $N = O(\frac{n^2}{\log n})$.

The following runtime results are known for \lotz. The SEMO takes $\Theta(n^3)$ iterations and function evaluations~\cite{LaumannsTZ04}.
For the GSEMO, \citet{Giel03} showed an upper bound of $O(n^3)$ iterations and evaluations, a matching lower bound was proven only for unrealistically small mutation rates \cite{DoerrKV13}.
An upper bound of $O(\mu n^2)$ iterations and function evaluations was shown for the $(\mu+1)$ SIBEA with population size $\mu \ge n+1$~\cite{BrockhoffFN08}. The classic \nsga with population size at least $4(n+1)$ solves the \lotz problem in $O(n^2)$ iterations and thus $O(N n^2)$ function evaluations.
The same bound was recently proven for the SPEA2~\cite{RenBLQ24}.
For the SMS-EMOA with population size $\mu \ge n+1$, a runtime guarantee of $O(\mu n^2)$ iterations and function evaluations was given by \citet{ZhengD24}.

Our analysis considers two phases.
The first phase bounds the time until the current population contains the all-$1$s bit string, which is Pareto-optimal.
The second phase bounds the remaining time until the entire Pareto front is covered.
Both phases take about the same time in expectation.

During the first phase, we consider individuals with an increasing prefix of~$1$s.
In the second phase, we consider individuals with an increasing suffix of~$0$s.
Each such improvement denotes a segment.
Each segment consists of the following two steps:
(1) We bound the time until the population contains about~$\frac{N}{n + 1}$ individuals that can easily be turned into improving offspring.
This step takes $O(\log \frac{N}{n + 1})$ in expectation (\Cref{lem:lotzQuicklyIncreasingPopulation}).
(2) We bound the time to create an improving offspring by $O(\frac{n^2}{N})$ in expectation.
\Cref{thm:runtimeBalancedNSGALOTZ} then follows since there are at most~$n$ segments per phase.

\begin{lemma}
  \label{lem:lotzQuicklyIncreasingPopulation}
  Consider the balanced \nsga with population size $N \geq 4(n + 1)$ optimizing the $\lotz \eqqcolon f$ function.
  Let~$t_0$ be any iteration.
  Furthermore, for all $t \in N$, let $v = \max_{y \in R_{t_0 + t}} f_1(y)$ and $Y_t = \{y \in R_{t_0 + t} \mid f_1(y) = v\}$.
  Last, let~$T$ denote the first iteration $t \in \N$ such that $|Y_{t}| \geq \max(1, \lfloor \frac{N}{n + 1} \rfloor - 4) \eqqcolon B$ or such that there is a $z \in R_{t + t_0}$ with $f_1(z) > v$.
  Then $E[T \mid t_0] = O(\log B)$.

  The same statement holds when exchanging all occurrences of~$f_1$ above by~$f_2$.
\end{lemma}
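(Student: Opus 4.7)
The plan is to apply the multiplicative up-drift theorem of \citet[Theorem~3]{DoerrK21algo}, as in the proof of \Cref{lem:discoverNewElement}. By the symmetry of \lotz under reversing the bit string and swapping the roles of zeros and ones, it suffices to treat the $f_1$ case. Write $v = \max_{y \in R_{t_0}} f_1(y)$; for $t < T$, we may assume the $f_1$-maximum in $R_{t_0+t}$ remains $v$, since otherwise $T$ has already triggered. Let $k^*$ be the largest $f_2$-value among individuals of $R_{t_0}$ with $f_1 = v$, and set the anchor objective value $A^* = (v, k^*)$. Tracking $Z_t := \{y \in R_{t_0+t} : f(y) = A^*\}$ yields a lower bound $|Z_t| \leq |Y_t|$; should an offspring with $f_1 = v$ and $f_2 > k^*$ arise, this itself increases $|Y_t|$ and we may re-anchor $A^*$ to the new lex-largest value without weakening the drift.

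Next, I would establish survival. Following the reasoning of \Cref{lem:numberOfElementsBoundOMMM}, which is based on \Cref{lem:numberOfElementsBound}, and using that the \lotz Pareto front has exactly $n+1$ distinct objective values, we get $\lfloor s/a \rfloor - 2m \geq \lfloor N/(n+1) \rfloor - 4 = B$ whenever $N \geq 4(n+1)$. Consequently, every iteration preserves at least $\min(|Z_t|, B)$ individuals with objective value $A^*$ into $P_{t_0+t+1}$ via the balanced tie-breaker when $A^*$ lies in the critical rank, and wholesale when $A^*$ sits in a higher non-dominated front.

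The multiplicative drift is then immediate. Each surviving $A^*$-individual in $P_{t_0+t+1}$ is selected as parent with probability $1/N$ for each of the $N$ offspring slots, and standard bit mutation copies it unchanged with probability $(1-1/n)^n \geq 1/(2e)$ for $n \geq 2$. Summing over the $N$ slots, the expected number of new $A^*$-valued offspring is at least $\min(|Z_t|, B)/(2e)$, so, writing $z_t := \min(B, |Z_t|)$, we obtain $E[z_{t+1} \mid z_t] \geq (1 + 1/(2e))\, z_t$ while $z_t < B$. Since $z_0 \geq 1$, \citet[Theorem~3]{DoerrK21algo} yields $E[T \mid t_0] = O(\log B)$.

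The main obstacle is the survival argument: once an offspring with $f_1 = v$ and $f_2 > k^*$ appears, the old anchor $A^*$ may drop out of $F_1$, and its copies lose automatic non-domination. The cleanest resolution is to re-anchor $A^*$ at every such event, observing that any anchor change corresponds to a strict increase of the lex-max objective value in $Y_t$ and hence only grows, never shrinks, the tracked lower bound $z_t$, so the multiplicative drift argument goes through unchanged.
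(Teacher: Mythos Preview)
Your overall plan matches the paper's: establish multiplicative growth of the count of best-$f_1$ individuals via copy mutations and invoke the up-drift theorem of \citet{DoerrK21algo}. The paper tracks $X_t = |Y_t|$ directly and argues that, for $X_t<B$, all of $Y_t$ lies in $F_1$ and hence survives selection via \Cref{lem:numberOfElementsBound}; a copy of \emph{any} $Y_t$-individual then contributes to the drift, with no anchor to maintain. You instead anchor to a single objective value $A^*=(v,k^*)$, which cleanly guarantees rank~$1$ but creates the re-anchoring problem.

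The re-anchoring step is where your argument breaks. Suppose you have $z_t$ copies of $A^*$, and then a single offspring with value $(v,k')$, $k'>k^*$, appears. The old $A^*$-individuals are now strictly dominated by this offspring and drop to rank $\ge 2$; if the critical front is $F_1$ (the typical case once $F_1$ is large), they may be discarded entirely. Re-anchoring to $(v,k')$ then resets the tracked count to~$1$. Your claim that re-anchoring ``only grows, never shrinks'' $z_t$ conflates $|Z_t|$ with $|Y_t|$: the new $(v,k')$-individual does increase $|Y_t|$, but says nothing about the size of the new $Z_t$. Since $k^*$ can climb through as many as $n-v$ distinct values before $T$ triggers, you may suffer $\Theta(n)$ restarts, and your argument yields only $O(n\log B)$ rather than the required $O(\log B)$. (A separate minor point: your bound on $|f(F_{i^*})|$ should come from the fact that every antichain in the \lotz objective space has size at most $n+1$, not from the size of the Pareto front; $F_{i^*}$ need not consist of Pareto optima.)
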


\Cref{lem:lotzQuicklyIncreasingPopulation} is sufficient to prove our main result.

\begin{theorem}
  \label{thm:runtimeBalancedNSGALOTZ}
  The expected runtime of the balanced NSGA-II with $N \geq 4(n + 1)$ on \lotz is $O(\frac{n^3}{N} + n \log \frac{N}{n + 1})$ iterations, i.e., $O(n^3 + N n \log \frac{N}{n + 1})$ function evaluations.
\end{theorem}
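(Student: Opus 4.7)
The plan is to follow the two-phase decomposition outlined just before the theorem statement. Phase~$1$ bounds the expected time until $1^n$ appears in the combined population $R_t$; Phase~$2$ bounds the remaining time until all Pareto-optimal strings $\{1^i 0^{n-i} : i \in [0..n]\}$ have been produced. I argue that both phases contribute $O(n^3/N + n \log(N/(n+1)))$ expected iterations; the sum yields the iteration bound and multiplying by~$N$ gives the function-evaluation bound.

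For Phase~$1$ I track $v_t := \max_{y \in R_t} f_1(y)$. An individual in $R_t$ achieving $f_1 = v_t$ together with the largest $f_2$ among such individuals is not strictly dominated in $R_t$, hence lies in the first non-dominated front; its objective value is preserved in $P_{t+1}$ by the survival property for first-front values (Lemma~1 of Zheng et al., which applies to the balanced \nsga, exactly as already invoked for \omm earlier in the paper). Hence $v_t$ is non-decreasing, so Phase~$1$ decomposes into at most $n$ segments, one per increment of $v_t$. Within a segment at value $v^* < n$, \Cref{lem:lotzQuicklyIncreasingPopulation} gives an expected time of $O(\log(N/(n+1)))$ iterations until either $v_t$ increases or we accumulate $B := \max(1, \lfloor N/(n+1) \rfloor - 4)$ individuals with $f_1 = v^*$. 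In the latter case every such individual has a $0$ at position $v^* + 1$, so a mutation flipping exactly that bit (probability $\ge (1/n)(1-1/n)^{n-1} \ge 1/(en)$) produces an offspring with $f_1 \ge v^* + 1$. Combining the $B/N$ probability of selecting a good parent with the $1/(en)$ mutation probability, each offspring is improving with probability at least $B/(eNn)$; a standard $1 - (1-p)^N$ computation over the $N$ offspring per iteration gives $O(n/B) = O(n^2/N)$ expected iterations for the improvement step. Summing over at most $n$ segments yields $O(n^3/N + n\log(N/(n+1)))$ iterations for Phase~$1$.

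For Phase~$2$ I use $1^n$, which persists in the population by the same survival argument, as an anchor for a sequential chain through the Pareto set. I track $W_t := \max\{w \in [0..n] : 1^{n-w} 0^w \in R_t\}$, initially at least $0$ and non-decreasing. Each segment seeks a clean single-bit flip of position $n - W_t$ applied to the Pareto-optimal parent $1^{n-W_t} 0^{W_t}$, which produces $1^{n-W_t-1} 0^{W_t+1}$ and thus adds the Pareto-optimal value $(n-W_t-1,\, W_t+1)$ to the population. Exactly as in Phase~$1$, the build-up of $B$ copies of $1^{n-W_t} 0^{W_t}$ takes $O(\log(N/(n+1)))$ expected iterations, and the subsequent wait for a clean flip takes $O(n^2/N)$; after at most $n$ segments $W_t = n$ and every Pareto-optimal value has been produced.

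The main obstacle will be formally supporting the Phase~$2$ build-up step: \Cref{lem:lotzQuicklyIncreasingPopulation} tracks the overall \emph{maximum} $f_2$-value in $R_t$, whereas the chain above requires growing the count of the specific Pareto-optimal individual $1^{n-W_t} 0^{W_t}$, which need not realize the current maximum $f_2$ when non-Pareto-optimal individuals with larger trailing-zero counts are present. I plan to handle this by adapting the proof of \Cref{lem:lotzQuicklyIncreasingPopulation}: since $(n-W_t, W_t)$ is Pareto-optimal it lies in the first front and is always represented in $P_t$ by the survival argument; the balanced tie-breaker then pushes the count of this specific value toward $\Omega(N/(n+1))$ once it reaches the critical rank, and since offspring that flip no bit preserve the parent's objective value (with probability $(1-1/n)^n \ge 1/e$ per offspring), the count of $1^{n-W_t} 0^{W_t}$ grows geometrically until reaching $B$, giving the $O(\log B)$ build-up bound for this specific Pareto-optimal value.
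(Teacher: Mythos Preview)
Your two-phase decomposition, the segment structure within each phase, and the build-up/improvement split per segment are exactly the paper's approach. You are also right that \Cref{lem:lotzQuicklyIncreasingPopulation} as stated only tracks the population-wide maximum of $f_1$ (or $f_2$), so for Phase~$2$ one really does need the adaptation you sketch---growing the count of the specific Pareto-optimal objective value $(n-W_t,W_t)$ via the same multiplicative-up-drift argument, using that this value always sits in the first front and that \Cref{lem:numberOfElementsBound} retains all copies while the count is below~$B$. The paper's proof simply invokes \Cref{lem:lotzQuicklyIncreasingPopulation} at that point without spelling out the adaptation, so your version is if anything more careful there.

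There is, however, a small but genuine gap in your Phase~$2$ bookkeeping. Your potential $W_t := \max\{w : 1^{n-w}0^{w}\in R_t\}$ only records the farthest single Pareto point reached, not contiguous coverage from $1^n$ downward. If at the start of Phase~$2$ the population happens to contain, say, $1^n$ and $1^{n-5}0^{5}$ but not $1^{n-2}0^{2}$, then $W_{t_0}=5$ and your chain proceeds from $1^{n-5}0^5$ onward, never producing $1^{n-1}0,\dots,1^{n-4}0^4$; the conclusion ``every Pareto-optimal value has been produced'' then fails. The fix is exactly the paper's choice: track instead the largest $w$ such that \emph{all} of $1^{n-w'}0^{w'}$ for $w'\in[0..w]$ are present (equivalently, the paper's $v_t=\min\{i:\forall j\in[i..n],\,(j,n-j)\in f(R_t)\}$). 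With that definition your segment argument goes through verbatim, since $1^{n-W_t}0^{W_t}$ is present, $1^{n-W_t-1}0^{W_t+1}$ is not, and producing the latter advances $W_t$ by exactly one.
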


\section{Empirical Runtime Analysis}
\label{sec:experiments}

We complement our theoretical results with experiments, aiming to see how much the population size of the \nsga actually influences the number of function evaluations.
Our code is publicly available~\cite{DoerrIK24AaaiCodeRepo}.

We run the classic and the balanced \nsga on \omm, for problem sizes $n \in 10 \cdot [3 .. 12]$ and three population sizes~$N$.
For each combination of~$n$ and~$N$ per algorithm, we start~$50$ independent runs and log the number of function evaluations until the Pareto front is covered for the first time.
Let $M = n + 1$ denote the size of the Pareto front.
We consider the choices $N \in \{2M, 4M, 8M, 16M\}$, noting that our theoretical result holds for all $N \geq 4M$ (\Cref{th:runtimeBalancedNSGAOneMinMax}).

\begin{figure}
  \includegraphics[width = \columnwidth]{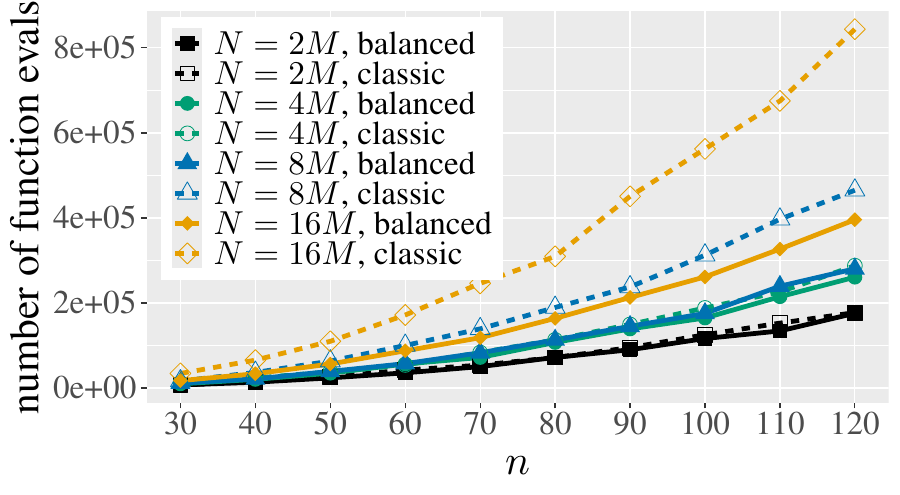}
  \caption{
    The average number of function evaluations of the classic (dashed lines) and the balanced (solid lines) \nsga optimizing \ommLong, for the shown population sizes~$N$ and problem sizes~$n$.
    The value~$M$ denotes the size of the Pareto front, i.e., $M = n + 1$.
    Each point is the average of~$50$ independent runs.
  }
  \label{fig:ommJointPlot}
\end{figure}

\Cref{fig:ommJointPlot} shows that the runtime increases for both algorithms for increasing~$N$.
This additional cost is larger for the classic \nsga than for the balanced one, statistically significantly for $N \in \{8M, 16M\}$.
Hence, not choosing the optimal population size for the classic \nsga is already for constant factors more penalized than for the balanced \nsga.
For $N \in \{2M, 4M\}$, the runtime of the classic and the balanced \nsga is roughly the same.
This shows that the classic \nsga still can perform very well for a careful choice of~$N$.
If the optimal choice for~$N$ is unknown, the balanced \nsga is a more robust choice.

We note that qualitatively very similar results hold for \ojzj and \lotz (as shown in the appendix).
Moreover, we show in the appendix that the balanced \nsga optimizes the $4$-objective \omm problem quickly.

\section{Conclusion}
\label{sec:conclusion}

We propose and analyze a simple tie-breaking modification to the classic \nsga, aiming to distribute individuals more evenly among different objective values with the same non-dominated rank and crowding distance.
Our theoretical results prove two major advantages of this modification:
(1) It is capable of efficiently optimizing multi-objective functions with at least three objectives for which the classic \nsga has at least an exponential expected runtime.
(2) Not choosing an optimal population size is far less important and leads for certain ranges to no asymptotic runtime loss.
Our experiments show that this effect can be already very clear when choosing slightly sub-optimal population sizes.

For future work, it would be interesting to prove lower bounds for the bi-objective scenarios and to improve the bounds in the many-objective setting.
Furthermore, we are optimistic that similar results could be obtained for other MOEAs that resort to random tie-breaking at some stage, for example, the \nsgathree \cite{DebJ14} and the SMS-EMOA \cite{BeumeNE07}.

}

\cleardoublepage
\subsection*{Acknowledgments}
This research benefited from the support of the FMJH Program Gaspard Monge for optimization and operations research and their interactions with data science.

\bibliography{ich_master.bib, alles_ea_master.bib, rest.bib}

\cleardoublepage
\section*{Appendix}

The section titles in this document share the names with the respective sections in the submission.

\subsection*{Benchmarks}

For \ojzj, we define the following partitions of its Pareto front and set, following the notation of \citet{DoerrZ21aaai}.
Let the inner part of the Pareto set be denoted by $S^*_{\mathrm{In}} = \{ x \mid \card{x}_1 \in [k..n-k]\}$, the outer part by $S^*_{\mathrm{Out}} = \{ x \mid \card{x}_1 \in \{0, n\}\}$, the inner part of the Pareto front by
$F^*_{\mathrm{In}} = \{ (a, 2k + n - a) \mid a \in [2k..n] \}$ and the outer part by $F^*_{\mathrm{Out}} = \{ (a, 2k + n - a) \mid a \in \{k, n+k \} \}$.

\subsection*{Properties of the Balanced \nsga}

\begin{proof}[Proof of \Cref{lem:max2mpositives}]
  We calculate the crowding distance for each individual in $F_{i^*}$.
  Let $A = (a_1, a_2, ..., a_m) \in f(F_{i^*})$.
  Assume there exists at least an individual $x \in R_t$ such that $f(x) = A$ (if not, the lemma holds trivially).
  For each $i \in [m]$ consider the sorted solution lists with respect to $f_i$ denoted as $S_{i.1}, S_{i.2}, ... S_{i.2N}$.
  Then there exist $l_i, r_i \in [2N]$ with $ l_i \le r_i$, and $r_i - l_i$ maximal, such that $\left[l_i..r_i\right] = \{j \mid f_i(S_{i.j}) = a_i\}$.
  If $l_i = 0$ or $l_i = 2N$, by definition of crowding distance we have $\cdis(S_{i.l_i}) = \infty > 0$.
  If $l_i \in [1..2N - 1]$, we have $\cdis(S_{i.l_i}) = \frac{f\left(S_{i.l_i+1}\right) - f\left(S_{i.l_i-1}\right)}{f\left(S_{i.2N}\right) - f\left(S_{i.1}\right)} \ge \frac{f\left(S_{i.l_i}\right) - f\left(S_{i.l_i-1}\right)}{f\left(S_{i.2N}\right) - f\left(S_{1.1}\right)} > 0$, from the definition of $l_i$.
  Similarly, we get that $\cdis(S_{i.r_i}) > 0$.
  We now have that $\cdis(S_{i.l_i}) > 0,\cdis(S_{i.r_i}) > 0 $, for each $i \in [m]$.
  For all $x \in  \left( \bigcap_{i \in [m]} \{S_{i.j} \mid j \in [l_i + 1..r_i - 1]\} \right)$  denote by $p_i$ its position in $S_i$.
  Hence, $p_i \in [l_i + 1..r_i - 1]$ for each $i \in [1, m]$.
  Thus, we know $f_i(S_{i.p_i-1}) = f_i(S_{i.p_i+1})$ for each $i \in [m]$ by the definition of $l_i$ and $r_i$.
  Therefore we get $\cdis(x) = 0$.

  Then the individuals in $R_t$ with objective value $A = (a_1, a_2, ..., a_m)$ and positive crowding distance are exactly $\bigcup _{i \in [m]}\{S_{i.l_i}, S_{i.r_i}\}$, which has at most $2m$ distinct elements.
\end{proof}

\begin{proof}[Proof of \Cref{lem:numberOfElementsBound}]
  If $\lfloor \frac{C}{S} \rfloor < 2m$, then the lemma says that we select at least~$0$ individuals, which is trivially true.
  Hence, we assume in the following that $\lfloor \frac{C}{S} \rfloor \geq 2m$.

  For any $A \in f(F_{i^*})$, denote by $\mathrm{CD}^*_{A}$ the number of individuals $y \in F_{i^*}$ such that $f(y) = A$ and $\cdis(y) \neq 0$, and let $\mathrm{CD^*} = \sum_{A \in f(F_{i^*})} \mathrm{CD}^*_A$.

  From the definition of crowding distance and \Cref{lem:max2mpositives} we know that $\mathrm{CD}^*_A \in [0..2m]$ for each $A \in F_{i^*}$.
  Therefore, $\mathrm{CD^*} = ~\sum_{A \in f(F_{i^*})} \mathrm{CD}^*_A \in~ \left[{0..2mS}\right]$.
  Hence, the number of remaining individuals with crowding distance $0$ to be selected is at least $C - 2mS$.
  By the property of the balanced \nsga, this means that for each objective value $A \in f(F_{i^*})$ there remain at least $\lfloor \frac{C}{S} \rfloor - 2m$ individuals with crowding distance $0$.
  However, if we do not have enough individuals $x$ in $F_{i^*}$ with $f(x) = A$  to \emph{fill in} all these spots, we keep all that we have, which is why we need a minimum between the two bounds.
\end{proof}

\subsection*{The Balanced \nsga Is Efficient For Three or More Objectives}

We state the formal definitions of the many-objective versions of the benchmarks we consider more precisely.
Let $n,m \in \N$ be such that $m$ is even and $m'=m/2$ divides $n$, that is, $n'=n/m'$ is integral. For a bit string $x \in \{0,1\}^n$ and $i \in [1..m']$ let $x_{B_i} = (x_{(i-1)n'+1}, \dots, x_{in'})$. Let $f$ be one of the \omm, \lotz, or $\ojzj_k$ problems defined on bit strings of length $n'$, where $k \in [2..n'/2]$ for $\ojzj_k$. Then the $m$-objective version $g$ of the bi-objective problem $f$ is defined by $g(x) = (f_1(x_{B_i}), f_2(x_{B_i}))_{i \in [m']}$ for all $x \in \{0,1\}^n$. It is easy to see that $x$ is Pareto-optimal for $g$ if and only if $x_{B_i}$ is Pareto-optimal for $f$ for all $i \in [1..m']$. Consequently, the Pareto front of $g$ is the $m'$-fold direct product of the Pareto front of $f$, and the cardinality of the former is the $m'$-th power of the cardinality of the latter, namely $M := (n'+1)^{m'}$ for \omm and \lotz and $M := (n'-2k+3)^{m'}$ for $\ojzj_k$.

\begin{proof}[Proof of \Cref{lem:survival}]
  If $|F_1| \le N$, then all of $F_1$ survives into $P_{t+1}$ and there is nothing to show. Hence assume that $|F_1| > N$. Then the critical rank is one and the next population is selected from $F_1$ according to crowding distance and, as tie breaker, multiplicity of objective values.

  We regard first the individuals with positive crowding distance. By assumption, their number $U_0$ is at most $U$. Since $U < N$, all these individuals are selected into $P_{t+1}$.

  In the second selection step, $N - U_0 \ge N - U \ge S$ individuals are selected in a balanced manner from the individuals with crowding distance zero. Since there are at most $S$ different objective values in $f(F_1)$, there are at most $S$ objective values covered by individuals with crowding distance zero. Since $N - U_0 \ge S$, for each of them at least one individual is selected in this second selection stage.

  In summary, each objective value survives into $P_{t+1}$.
\end{proof}

\begin{proof}[Proof of \Cref{thm:manyo}]
  Let $m' = m/2$ be the number of blocks in the definition of the $m$-objective versions of the problems regarded and $n' = n/m'$ be the length of each block.

  We start with the case of \omm. The size of the Pareto front is easily seen to be $M \coloneqq (n' + 1)^{m'}$. We further note that any solution is Pareto optimal, hence the non-dominated sorting ends with a single front $F_1$, whose rank is critical. Also, $S := M$ is also an upper bound for any set of pair-wise incomparable solutions. From~\cite[(1)]{ZhengD24many}, noting that each of the $m$ objectives of \omm takes exactly $n'+1$ different values, we know that the maximum number of individuals with positive crowding distance in a set of individuals such that any two are incomparable or have identical objective value, is at most $U\coloneqq 2m(n'+1)$.

  From Lemma~\ref{lem:survival}, using the values of $S$ and $U$ just defined, and our assumption on the size $N$ of the population, we see that solution values are never lost, that is, once the population contains an individual with a particular objective value $v \in [0..n']^{m'}$, it does so forever.

  With this information, we estimate the runtime by adding up the waiting times for generating a solution with an objective value not yet in the population. For this, we note that as long as $f(P_t)$ is not yet the full Pareto front, there exist $x \in P_t$ and $y \in \{0,1\}^n$ such that $\|x-y\|_1=1$ and $f(y) \notin f(P_t)$. The probability that a single application of the mutation operator to $x$ generates $y$ is $\frac 1n (1-\frac 1n)^{n-1} \ge \frac{1}{en}$. Hence the probability that none of the $N$ offspring generations in this iteration does not select $x$ as parent and then create $y$ from it, is at most $(1 - \frac 1N + \frac 1N (1-\frac{1}{en}))^N = (1 - \frac{1}{enN})^N \le \exp(-\frac{1}{en}) \le 1 - \frac 1 {2en}$. Consequently, with probability at least $\frac 1 {2en}$, the individual $y$ is generated and thus the corresponding objective value is covered by the population. Hence it takes an expected number of $2en$ iterations to cover an additional objective value, showing that the full expected runtime is $2enM$ iterations.

  The proof for our \lotz result reuses many arguments from the analysis of \omm. We take $S \le (n'+1)^{m-1}$ now. Since the range of the objective values is $[0..n']$ again, we can use Lemma~\ref{lem:survival} with the same values of $S$ and $U$ as before.
  Also, the expected waiting time for generating a particular neighbor of an existing solution is again $2en$ iterations. We also note that the maximum LeadingOnes value in the population is never decreasing -- a solution with this value maximal is only strictly dominated (and thus pushed onto a higher front) by a solution with larger LeadingOnes value. Hence the largest LeadingOnes value in the combined parent and offspring population always lies in the first front, and hence survives by Lemma~\ref{lem:survival}. Now with the previous argument, we see that it takes an expected number of at most $2en$ iterations to increase this maximum LeadingOnes value by one. Hence after at most $2en^2$ iterations, we have the all-ones string in the population.

  We note that this solution lies on the Pareto front. From it, by repeatedly generating suitable neighbors, we can compute the whole Pareto front. We do not lose solution values by Lemma~\ref{lem:survival}. Hence after another $2enM$ iterations, the full Pareto front is computed. This completes the proof for \lotz.

  For the $\ojzj_k$ problem, we first consider the maximum sum of objective values in the population, that is, $m_t = \max\{s(x) \mid x \in P_t\}$ with $s(x)=\sum_{i=1}^m f_i(x)$. The number $s(x)$ attains the maximum value $m'(n'+1)$ if and only if $x$ is a Pareto optimum, that is, $\|x_{B_i}\|_1 \in \{0\} \cup [k..n'-k] \cup \{n'\}$ for all $i \in [1..m']$.

  We first note that any individual $x$ with maximum $s$-value in $R_t$ is not strictly dominated by any other individual, hence $x \in F_1$ in the selection phase and thus its objective value survives by Lemma~\ref{lem:survival} (note that we can use the same $U$ as before since again each objective takes $n'$ different values). Consequently, $m_t$ is non-decreasing over time. Consider some iteration with $m_t < m'(n'+1)$. Then there are $x \in P_t$ with $s(x) = m_t$ and $y \in \{0,1\}^n$ with $\|x-y\|_1 = 1$ and $s(y) = s(x)+2$. As before, the expected time to generate (and thus let enter in the population) such a $y$ is at most $2en$ iterations. Hence after an expected time of at most $m'(k-1)2en$ iterations,  $m_t$ has reached the maximum possible value, that is, the population contains an individual on the Pareto front.

  From this point on, we again note that Pareto-optimal solution values remain in the population (Lemma~\ref{lem:survival}). While the Pareto front has not yet been fully computed, we always have some $x \in P$ and some $y \in \{0,1\}^n$ such that $\|x-y\|_1 \le k$ and $f(y) \notin f(P_t)$. The probability to generate $y$ in one iteration, analogous to the computation further above, is at least $1 / (2en^k)$. Hence an expected number of at most $2en^k M$ iterations suffice to compute the remainder of the Pareto front.
\end{proof}

\paragraph{Discussion of \Cref{thm:runtimeBalancedNSGAThreeObjectiveOMM}.}
The $3$-objective \omm benchmark was defined by \citet{ZhengD24many} as follows.
Let $n\in \N$ be even. Then the $3$-objective \omm function is $(f_1,f_2,f_3)\colon\{0,1\}^n\rightarrow \R^3$ with
\begin{align*}
  f_1(x) & = \sum\nolimits_{i \in [n]}(1-x_i), \textrm{ as well as} \\
  f_2(x) & = \sum\nolimits_{i \in [n/2]} x_i, \textrm{ and }
  f_3(x) = \sum\nolimits_{i \in [n/2+1 .. n]} x_i,
\end{align*}
for all $x=(x_1,\dots,x_n)\in\{0,1\}^n$.
Hence the first objective counts the number of zeros in the argument. The second and third objectives count the numbers of ones in the first and second half of it, respectively. Any $x\in\{0,1\}^n$ is a Pareto optimum of this problem. The Pareto front thus is $\{(n-v_2-v_3,v_2,v_3)\mid v_2,v_3\in [0..n']\}$, where we write $n'\coloneqq \frac n2$. The size of the Pareto front is $M=(\frac n2 +1)^2$ and this is also the size $S$ of a largest set of pair-wise incomparable solutions. Noting that the first objective takes $n+1$ values, whereas the second and third take $n'+1$ values, we see that we can take $U = 4n+6$ in Lemma~\ref{lem:survival}. With these preparations, a proof completely analogous to the one for even numbers of objectives (and thus omitted) yields the following result.

\subsection*{Runtime Analysis on Bi-Objective \omm}

\begin{lemma} [Lemma~$1$ from \cite{ZhengLD22}]
  \label{lem:keepParetoFrontOMMM}
  Consider one iteration of the balanced \nsga with population size $N \ge 4(n + 1)$, optimizing the \omm function. Assume that in some iteration $t \in \N$ the combined parent and offspring population $R_{t}$ contains a solution $x$ with objective value $(k, n - k)$ for some $k \in \left[0..n \right]$. Then also the next parent population $P_{t+1}$ contains an individual~$y$ with $f(y) = (k, n - k)$.
\end{lemma}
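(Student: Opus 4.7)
The plan is to exploit two structural features of the \omm function: first, every individual is Pareto-optimal, so the non-dominated sorting of $R_t$ produces a single front $F_1 = R_t$ and the critical rank is $j^* = 1$; second, the selection therefore reduces to picking the $N$ individuals with the largest crowding distances, with tie-breaking only among individuals of identical CD. Since I shall exhibit a surviving representative of the objective value $(k, n-k)$ among individuals of strictly positive CD, the argument will be insensitive to the tie-breaking rule and will apply to both the classic and the balanced \nsga, matching the remark that follows the lemma statement.

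Fix any $x \in R_t$ with $f(x) = (k, n-k)$. The first step is to show that at least one individual in $R_t$ with this objective value has strictly positive crowding distance. I would sort $R_t$ by $f_1$ and let $l \le r$ denote the leftmost and rightmost positions occupied by the contiguous block of individuals whose $f_1$-value equals $k$ (the block is contiguous since the sort is by $f_1$). If $l = 1$, the leftmost individual of the block receives an infinite CD contribution from $f_1$ by the boundary convention. If $l > 1$, the individual at position $l-1$ has $f_1 < k$ while the individual at position $l+1$ has $f_1 \ge k$, so the CD contribution from $f_1$ at position $l$ is strictly positive. Either way, at least one individual with $f(x) = (k, n-k)$ satisfies $\cdis(x) > 0$.

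The second step is to bound the total number of positive-CD individuals in $R_t$. Applying \Cref{lem:max2mpositives} with $m = 2$ to the critical rank $F_1 = R_t$, for each of the at most $n+1$ distinct Pareto-optimal objective values of \omm, at most $2m = 4$ individuals carry a positive crowding distance. Hence the total number of positive-CD individuals in $R_t$ is at most $4(n+1) \le N$ by the population-size hypothesis. Since the \nsga selection procedure (classic or balanced) always keeps all individuals whose CD exceeds the critical CD level, and since positive-CD individuals strictly dominate zero-CD individuals in this ordering, all of them are copied into $P_{t+1}$.

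Combining the two steps, the positive-CD representative of $(k, n-k)$ identified in step one is among those selected into $P_{t+1}$, which proves the lemma. The only slightly delicate point is the handling of the boundary positions $l = 1$ or $r = 2N$ when computing CDs; in these corner cases the infinite-CD convention only strengthens the positivity conclusion, so no separate case analysis is needed. I do not expect this proof to contain a genuine obstacle: the result is essentially a pigeonhole consequence of the bound $N \ge 4(n+1)$ matching the worst-case count of positive-CD individuals supplied by \Cref{lem:max2mpositives}, combined with the fact that the tie-breaker never touches strictly positive CDs.
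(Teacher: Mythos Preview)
Your proof is correct and follows essentially the same approach as the original argument of \citet{ZhengLD22} that the paper cites rather than reproves. The paper itself does not give a separate proof of this lemma; it simply observes that the argument from \cite{ZhengLD22} carries over to the balanced \nsga because the new tie-breaker only acts on individuals of identical crowding distance, exactly the point you make explicit. Your use of \Cref{lem:max2mpositives} to bound the total number of positive-CD individuals by $4(n+1)$ is the same pigeonhole step underlying the cited result.
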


\begin{theorem}[{multiplicative up-drift~\cite[Theorem~$3$]{DoerrK21algo}}]
  \label{th:multiplicativeUpDriftTheorem}
  Let $(X_t)_{t\in\N}$ be a stochastic process over the positive integers. Assume that there are $B, k \in \Z_{\ge 1}, \gamma_0 \le 1$, and $\delta \in (0, 1]$ such that $B - 1 \le \min \{\gamma_0 k, (1 + \delta)^{-1}k \}$ and for all $t \ge 0$ and all $x \in \{1, ..., B - 1\}$ with $\Pr[X_t = x] > 0$ we have the binomial condition $\left(X_{t+1} \mid X_t = x \right) \succeq \Bin(k, (1+\delta)x/k )$. Let $T = \min\{t \ge 0 \mid X_t \ge B\}$. Then $E[T] = O\left( \frac{\log B}{\delta} \right )$.
\end{theorem}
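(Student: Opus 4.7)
The plan is to reduce the theorem to a drift-theorem argument via a logarithmic potential. By stochastic monotonicity I can couple so that $X_{t+1} = \max(1, S_t)$ with $S_t \sim \Bin(k, (1+\delta) X_t / k)$; the hypothesis $B - 1 \le (1+\delta)^{-1} k$ ensures the binomial parameter stays in $[0,1]$ for every $x \in [1, B-1]$. Define the potential $\Phi(x) = \log_2(B) - \log_2 \min(x, B)$, which decreases from $\log_2 B$ at $x=1$ to $0$ once $x \ge B$. The aim is to show $E[\Phi(X_t) - \Phi(X_{t+1}) \mid X_t = x] \ge c\delta$ for a universal constant $c > 0$ uniformly over $x \in [1, B-1]$; the additive drift theorem then yields $E[T] \le \Phi(X_0)/(c\delta) = O(\log B /\delta)$.

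To establish this drift bound, I would split the state space at a threshold $x^\star = \Theta(1/\delta^2)$. In the concentration regime $x \ge x^\star$, a multiplicative Chernoff bound on $S_t$ yields $\Pr[S_t \ge (1+\delta/2) x] \ge 1 - \exp(-\Omega(\delta^2 x^\star))$, which is at least $3/4$ by choosing the hidden constant in $x^\star$ sufficiently large; the log-drift contribution from this good event is $\Omega(\delta)$, while the bad event, whose loss is bounded by $\log_2 B$, contributes only $o(\delta)$ by virtue of its exponentially small probability. In the small regime $x < x^\star$ I would instead argue that the doubling probability $\Pr[S_t \ge 2x]$ is bounded below by a positive constant: by Poisson approximation, the binomial with mean $(1+\delta) x \le 2/\delta$ behaves essentially like a Poisson of the same mean, whose tail beyond twice the mean is a positive constant (directly enumerated for $x = O(1)$, and controlled by a fixed Chernoff slack for larger $x$). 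This event contributes a log-drift of at least a positive constant.

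The main obstacle is twofold in the small regime. First, when $S_t = 0$ we have $X_{t+1} = 1$ and the potential jumps up by $\log_2 x$; I would offset this using $\Pr[S_t = 0] \le \exp(-(1+\delta) x)$, so that $\log_2 x \cdot \Pr[S_t = 0]$ is a universal constant dominated by the positive contribution of the doubling event whenever $x \ge 2$ (and vanishing at $x = 1$). Second, Jensen's inequality $E[\log_2 S_t] \le \log_2 E[S_t]$ points the wrong way for a direct log-drift calculation precisely in the range where $\delta$ is very small and $x$ is moderate (say $x = \Theta(1/\delta)$), and a numerical check already shows that the naive log-drift at $x = 2$ is slightly negative when $\delta \to 0$; to recover here I would switch to the affine potential $\Psi(x) = B - x$ on that sub-range, which has drift $E[\max(1, S_t) - x] \ge \delta x$ and yields $O(\log B / \delta)$ via the variable drift theorem after accounting for upward jumps of $\Psi$ by coupling to the running maximum $M_t = \max_{s \le t} X_s$. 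Gluing the log-potential analysis in the large-$x$ and very-small-$x$ parts with the affine-potential variable-drift argument in the moderate range gives the overall $O(\log B / \delta)$ bound.
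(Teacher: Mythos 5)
A preliminary remark on scope: the paper does not prove this statement at all. It is quoted as Theorem~3 of Doerr and Kötzing (2021) and used as a black box (in every application in the paper with the constant value $\delta = 1/(2e)$). So there is no in-paper proof to compare against, and your attempt has to be judged on its own terms as a proof of the cited result.

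Judged that way, there is a genuine gap in the intermediate regime, roughly $x$ between a large constant and $\Theta(\delta^{-2})$, and this regime is precisely where the entire difficulty of the theorem lies. Your claim that the doubling probability $\Pr[S_t \ge 2x]$ is bounded below by a positive constant for all $x < x^\star$ is false: $S_t$ has mean $(1+\delta)x \approx x$, so by Chernoff (or the Poisson heuristic you invoke) $\Pr[S_t \ge 2x] = e^{-\Theta(x)}$, which vanishes as soon as $x = \omega(1)$. Worse, the log-potential drift is not merely hard to bound but genuinely negative there: for the worst process allowed by the hypotheses (essentially $S_t \sim \mathrm{Poisson}((1+\delta)x)$ when $k \gg x$), a second-order expansion gives $E[\log_2 X_{t+1} - \log_2 X_t \mid X_t = x] \approx (\delta - \tfrac{1}{2x})/\ln 2 < 0$ for all $x < 1/(2\delta)$, because the variance of the binomial beats the multiplicative gain --- consistent with your own numerical observation at $x=2$. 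No constant $c$ makes the uniform drift bound $c\delta$ true for this potential. Your fallback does not close the gap: applying a variable-drift theorem to $\Psi(x) = B - x$ with drift $\delta x$ is exactly the change of variables $\int \mathrm{d}x/(\delta x) = \ln(x)/\delta$, i.e., the log potential again (and the standard variable-drift theorem does not even apply, since the drift $\delta(B-\Psi)$ is decreasing in $\Psi$); retreating to plain additive drift yields only $O(B/\delta)$. The appeal to ``coupling to the running maximum'' is unsubstantiated --- the binomial condition does not prevent the process from falling back, and the running maximum is not a Markov process satisfying your drift condition. What is actually needed in this regime is an excursion/branching-process argument: from level $x = o(1/\delta)$ an excursion escapes to $\Theta(\delta^{-2})$ with probability only $\Theta(\delta x)$, and one must control the number and expected length of failed excursions to still obtain $O(\log(B)/\delta)$ overall; this is why the cited result is a substantial standalone paper rather than a two-line drift computation. (For the constant $\delta = 1/(2e)$ used in the present paper the problematic regime has constant size and your large-$x$ and very-small-$x$ arguments would essentially suffice; also note that even in the concentration regime your threshold must grow like $\delta^{-2}\log(\log(B)/\delta)$ for the bad-event loss of $\log_2 B$ to be $o(\delta)$.)
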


The following lemma provides a good bound on never having a successful mutation within~$N$ tries.

\begin{lemma}[{\cite[Corollary~$1.4.3$]{DoerrN20}}]
  \label{thm:boundOnNoSuccess}
  For all $x \in [0, 1]$ and $y \in \R_{> 0}$, it holds that $(1 - x)^y \leq \frac{1}{1 + xy}$.
\end{lemma}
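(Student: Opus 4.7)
The plan is to take logarithms and reduce the claim to showing that a suitable one-variable function is concave, satisfies $h(0)=0$, and has non-positive derivative at $0$. First, I would dispose of the boundary case $x = 1$: then the left-hand side is $0$ and the right-hand side is $\frac{1}{1+y} > 0$, so the inequality is trivial. From here on, assume $x \in [0, 1)$, so both sides are strictly positive and logarithms are well-defined.

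Taking logarithms, the claim becomes equivalent to $h(y) \coloneqq y \ln(1-x) + \ln(1+xy) \le 0$ for all $y \ge 0$. The strategy is to verify three properties of $h$: (i)~the initial value $h(0) = 0$ is immediate; (ii)~$h''(y) = -\frac{x^2}{(1+xy)^2} \le 0$, so $h$ is concave on $[0, \infty)$ and $h'$ is non-increasing; (iii)~$h'(0) = \ln(1-x) + x \le 0$. Combining (ii) and (iii) yields $h'(y) \le h'(0) \le 0$ for all $y \ge 0$, so $h$ is non-increasing on $[0,\infty)$, and (i) then forces $h(y) \le h(0) = 0$, as desired.

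There is essentially no obstacle: this is a routine calculus exercise, and the only ingredient beyond elementary differentiation is the textbook bound $\ln(1-x) \le -x$ for $x \in [0, 1)$, which is a one-line consequence of $\ln(1+z) \le z$ (or of concavity of $\ln$ with its tangent at $1$). A natural alternative would be to prove the claim first for positive integer $y$ by induction, with base case $(1-x)(1+x) = 1 - x^2 \le 1$ and an easy algebraic inductive step (reducing $(1-x)^{y+1}(1+x(y+1)) \le 1$ to the base case via the inductive hypothesis), and then extend to real $y > 0$ by a monotonicity/continuity argument; however, the concavity plan above handles all $y > 0$ uniformly and seems cleaner.
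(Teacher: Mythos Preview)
Your proof is correct. Note, however, that the paper does not actually prove this lemma: it is stated as a citation of a known textbook inequality (Corollary~1.4.3 in~\cite{DoerrN20}) and used as a black box throughout the runtime analyses. There is therefore no ``paper's own proof'' to compare against; your concavity argument supplies a clean self-contained justification that the paper simply imports from the literature.
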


\begin{proof}[Proof of \Cref{lem:numberOfElementsBoundOMMM}]
  In \omm, for all $x, y \in \{0,1\}^n$, we have $x \nprec y$ and $y \nprec x$.
  Thus, all individuals in $R_t$ have rank $1$ in the non-dominated sorting of $R_t$.
  Therefore the critical rank is $1$, so $F_{i^*} = R_t$.
  The result follows by applying \Cref{lem:numberOfElementsBound} with $m = 2$, $C = N$, and $S = n + 1$, noting that $\card{f(R_t)} \le n + 1$.
\end{proof}

\begin{proof}[Proof of \Cref{lem:discoverNewElement}]
  Consider we are in iteration $t_0$ when we have $x_0$ in the parent population $P_{t_0}$.
  Let us denote by $X^{v.i}_t$ the number of individuals $x~ \in~ R_{t+t_0}$ with $f_i(x) = v$, that is $X^{v.i}_t \coloneqq \card{\{x \in R_{t+t_0} \mid f_i(x) = v\}}$.
  Let $B_v \coloneqq ~ \max \left(1, \min \left( {\lfloor \frac{N}{n+1} \rfloor - 4}, \lceil \frac{n}{v} \rceil \right) \right)$.
  Hence, $B_v \ge 1$.
  Denote by $G^i_v$ the number of iterations needed to reach the $B_v$ bound, that is $G^i_v \coloneqq \min \{t \in \N \mid X^{v.i}_t \ge B_v \}$.
  Also denote by $D^i_v$ the number of iterations needed to discover an individual $y$ with $f_i(y) = v - 1$ when we have at least $B_v$ individuals $x$ with $f_i(x) = v$.

  We see that $T^i_v \le G^i_v + D^i_v$, because obtaining an individual $x_f$ with $f_i(x_f) = v - 1$ requires at most as many iterations as obtaining such individual after reaching the $B_v$ bound for $X_t^{v.i}$. Hence, by the linearity of expectation as well as the law of total expectation,
  \begin{equation}
    E[T^i_v] \le E[G^i_v] + E[D^i_v] = E[G^i_v] + E\left[ E[D^i_v \mid G^i_v] \right].
    \label{eq:TEquation}
  \end{equation}

  We now derive bounds for $E[G^i_v]$ and $E[D^i_v]$.

  For $E[G^i_v]$, we know $X^{v.i}_0 \ge 1$, as $x_0 \in P_{t_0 + 0}$.
  We rewrite $B_v = \min \left(  \max\left(1, { \lfloor \frac{N}{n+1} \rfloor - 4} \right),\max\left(1, \lceil \frac{n}{v} \rceil \right) \right)$ and notice that $\lceil \frac{n}{v} \rceil \ge 1$ as $v \le n$.
  Therefore we get that $B_v = \min \left(  \max\left(1, { \lfloor \frac{N}{n+1} \rfloor - 4} \right), \lceil \frac{n}{v} \rceil \right)$.
  We also know from \Cref{lem:keepParetoFrontOMMM,lem:numberOfElementsBoundOMMM} that, since $B_v \le \max \left(1, \lfloor \frac{N}{n + 1} \rfloor - 4 \right)$, for any iteration $t < G^i_v$, at least $\min(X_t, B_v)$ individuals with $f_i(x) = v$ are selected in $P_{t_0 + t + 1}$ and consequently kept in $R_{t_0 + t + 1}$.
  However, as $t < G^i_v$, it means that $X_t < B_v$ and thus $\min(X_t, B_v) = X_t$, which means that at least $X_t$ individuals with $f_i(x) = v$ are kept in $R_{t_0 + t + 1}$.
  In addition to those, the algorithm produces some new individuals by mutation.
  Since we generate offspring by $N$ times choosing the parent uniformly at random and then applying standard bit mutation, the probability of picking an individual $x$ with $f_i(x) = v$ as parent is $\frac{X^{v.i}_t}{N}$ by definition. Hence, the probability of selecting such an individual and creating a copy of it as an offspring is $\frac{X^{v.i}_t}{N} \left( 1 - \frac{1}{n} \right)^n \ge \frac{X^{v.i}_t}{eN} \left(1 - \frac{1}{n} \right)   \ge \frac{X^{v.i}_t}{2eN}$ as we assume $n$ big enough, hence $n \ge 2$.

  We aim to apply \Cref{th:multiplicativeUpDriftTheorem} to $(X_t)_{t \in \N}$ and to~$G^i_v$.
  For all $t < G^i_v$, we have $X^{v.i}_{t+1} \ge~ X^{v.i}_{t} + \Bin \left( N, \frac{X^{v.i}_t}{2eN} \right)$, as we repeat $N$ times the selection of parents.
  We apply \Cref{th:multiplicativeUpDriftTheorem} with $B = B_v$, $\delta = \frac{1}{2e}$, and $k = N$, noting that $B_v - 1 \le  \lfloor \frac{N}{n + 1} \rfloor - 4 \le \frac{N}{2} \le \min\{N, (1 + \delta)^{-1}N\}$ and we get that $E[G^i_v] = O\left(\frac{\log B_v}{e}\right) = O\left(\log B_v\right)$.
  Since $B_v \le \lceil \frac{n}{v} \rceil$ by how we rewrote $B_v$, we get that
  \begin{equation}
    E[G^i_v] = O\left(\log \left\lceil \frac{n}{v} \right\rceil \right).
    \label{eq:G^i_v}
  \end{equation}

  For $E[D^i_v]$, we know $E[D^i_v] = E[E[D^i_v \mid G^i_v]]$ by the law of total expectation.
  This condition assures we start with at least $B_v$ individuals $x$ with $f_i(x) = v$.
  Since $B_v \le \max \left(1, \lfloor \frac{N}{n + 1} \rfloor - 4 \right)$, by \Cref{lem:keepParetoFrontOMMM,lem:numberOfElementsBoundOMMM}, we know that for all following iterations, we never have less than $B_v \ge 1$ individuals $x$ with $f_i(x) = v$.

  The probability to select a parent $x$ with $f_i(x) = v$ and to apply mutation to get an individual $y$ with $f_i(y) = v - 1$ is greater than $\frac{B_v}{N} \cdot v \cdot \frac{1}{n} \left( 1 - \frac{1}{n} \right)^{n-1} \ge \frac{B_v}{N} \frac{v}{en}$, as this only considers the case of flipping one of the $v$ bits.
  Since we apply this process $N$ times, the probability that in a single offspring generation we discover at least one such $y$ with $f_i(y) = v - 1$ is greater than $1 - \left( 1 - \frac{B_v}{N} \frac{v}{en} \right)^N$.
  By \Cref{thm:boundOnNoSuccess}, $\left( 1 - \frac{B_v}{N} \frac{v}{en} \right)^N \le \frac{1}{1 + \frac{B_v v}{en}} = \frac{en}{en + B_v v}$.
  Hence we have that the probability to discover $y$ is greater than $1 - \frac{en}{en + B_v v} = \frac{B_v v}{en + B_v v}$.

  Thus, $E \left[ D^i_v \mid G^i_v \right] \le \frac{en + v B_v}{v B_v} = \frac{en}{v B_v} + 1$. Furthermore, since $B_v = \min \left(  \max\left(1, { \lfloor \frac{N}{n+1} \rfloor - 4} \right), \lceil \frac{n}{v} \rceil \right)$ we have that
  \begin{align}
    E\left[D^i_v\right]
     & = E\left[E\left[D^i_v \mid G^i_v\right]\right]
    \le E\left[\frac{en}{v B_v} + 1\right] = \frac{en}{v B_v} + 1 \nonumber                                                                                                                  \\
     & = O\left(\frac{n}{v B_v} + 1\right) \nonumber                                                                                                                                         \\
     & = O\left( \frac{n}{v} \frac{1}{\min \left(  \max\left(1, { \lfloor \frac{N}{n+1} \rfloor - 4} \right), \lceil \frac{n}{v} \rceil \right)} + 1\right) \nonumber                        \\
     & = O \left( \frac{n}{v} \max \left({\frac{1}{\max\left(1, { \lfloor \frac{N}{n+1} \rfloor - 4} \right)}}, {\frac{1} {\lceil \frac{n}{v} \rceil}}\right)  + 1 \right) \label{eq:Dline3} \\
     & = O\left( \max\left(\frac{n ^2}{N v},{ \frac{n v} {v n}}\right) + 1 \right)
    =  O\left( \frac{n^2}{Nv} + 1\right).
    \label{eq:D^i_v}
  \end{align}

  Note that since $N \ge 4(n + 1)$ we have that $\max\left(1, { \lfloor \frac{N}{n+1} \rfloor - 4} \right) = \Theta(\frac{N}{n}) $, which we substitute into \cref{eq:Dline3}.
  By \cref{eq:TEquation} and adding up the bounds from \cref{eq:G^i_v}, \cref{eq:D^i_v} we get $E[T_v^i]~= ~O \left(\log \lceil \frac{n}{v} \rceil + \frac{n^2}{Nv} + 1 \right)$.
\end{proof}

\begin{proof}[Proof of \Cref{th:runtimeGeneralBalancedNSGAOneMinMax}]
  We use the notation from \Cref{lem:discoverNewElement}.
  The expected number of iterations to obtain individuals containing objective values $(k_0, n-~k_0)$, $(k_0-~1, n~-~k_0+1)$, $(k_0-2,n-k_0+2)$,$\ldots$, $(\alpha, n-\alpha)$ is at most $\sum_{v=1 + \alpha}^{k_0} E[T^{(1)}_v]$.
  Similarly, the expected number of iterations to obtain individuals containing objective values $(k_0, n - k_0)$, $(k_0 +1, n - k_0 -~ 1)$, $(k_0 + 2, n - k_0 - 2)$,$\ldots$,$(n - \alpha, \alpha)$ is at most $\sum_{v=1 + \alpha}^{n - k_0} E[T^{(2)}_v]$.
  Therefore, the expected number of iterations to cover the restriction of the Pareto front is at most $\sum_{v=1 + \alpha}^{v=k_0} E[T^{\left(1\right)}_v] + \sum_{v=1 + \alpha}^{n - k_0} E[T^{\left(2\right)}_v] \le \sum_{v=1 + \alpha}^{n - \alpha} \left(E[T^{\left(1\right)}_v] + E[T^{\left(2\right)}_v]\right)$.

  By \Cref{lem:discoverNewElement}, $E[T^{\left(i\right)}_v] = O\left(\log{\lceil \frac{n}{v} \rceil} + \frac{n^2}{Nv} + 1\right)$ for $i \in \{1, 2\}$.
  Therefore the number of iterations to cover the Pareto front restricted to individuals $x$ with~$\card{x}_1 \in [\alpha..n - \alpha]$ is less than
  \begin{align*}
     & \sum_{v=1 + \alpha}^{n - \alpha} O\left(\log{\left\lceil \frac{n}{v} \right\rceil} + \frac{n^2}{Nv} + 1\right) \\
     & = O\left(n +\sum_{v=1}^{n} \log{\frac{n + v}{v} } + \sum_{v=1}^{n} \frac{n^2}{Nv} \right)                      \\
     & = O\left(n + \log{\frac{(2n)!}{(n!)(n!)}} + \frac{n^2}{N} \sum_{v=1}^{n} \frac{1}{v} \right)                   \\
     & = O\left(n + \log{4^n} + \frac{n^2}{N} \log{n}\right)                                                          \\
     & = O\left(n + \frac{n^2}{N} \log{n} \right).
  \end{align*}
  We have used the fact that the central coefficient $\frac{(2n)!}{(n!)^2} \le 4^n$.
\end{proof}

\subsection*{Runtime Analysis on Bi-Objective \ojzj}

The following lemma shows that non-dominated objective values in the combined parent and offspring population survive into the next generation when the population size is sufficiently large. The proof is identical to the proof of the corresponding result for the classic \nsga \cite[Lemma~$1$]{DoerrQ23tec} since both algorithms in the considered setting let all individuals with positive crowding distance survive (hence our new tie-breaker is not relevant for these individuals).
\begin{lemma}
  \label{lem:keepParetoFrontOJZJ}
  Consider one iteration of the balanced \nsga with population size $N \ge 4(n - 2k + 3)$ optimizing \ojzj.
  If in some iteration $t \in \N$ the combined parent and offspring population $R_t$ contains an individual $x$ of rank $1$, then the next parent population $P_{t+1}$ contains an individual $y$ such that $f (y) = f (x)$.
  Moreover, if an objective value on the Pareto front appears in $R_t$, it is kept in all future iterations.
\end{lemma}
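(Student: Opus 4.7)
The plan is to mirror the proof of Lemma~1 of \citet{DoerrQ23tec} for the classic \nsga. The balanced \nsga differs from the classic \nsga only in how ties are broken among individuals of identical crowding distance; individuals of strictly positive CD are still selected with priority. Consequently, if the objective value $f(x)$ is carried by at least one positive-CD individual in $F_1$ and if the population size $N$ is large enough to accommodate every positive-CD individual in $F_1$, then $f(x)$ automatically survives and the new tie-breaker is never consulted for this part of the argument.

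First, if $|F_1| \le N$, then all of $F_1$ enters $P_{t+1}$ and one may take $y = x$. Otherwise $F_1$ is the critical rank. I would then show that every objective value $A \in f(F_1)$ has a positive-CD representative in $F_1$. Sorting $F_1$ by the first objective with arbitrary tie-breaking, the elements with $f = A$ form a contiguous block $B_A$. If $B_A$ contains one of the two global extremes of the sort, that extreme has infinite CD. Otherwise the leftmost element of $B_A$ has a sorted left neighbour outside $B_A$, hence of strictly smaller $f_1$, so its objective-$1$ CD contribution is strictly positive. Either way, $B_A$ contains an element of positive CD.

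Second, I would bound the total number of positive-CD elements in $F_1$. By \Cref{lem:max2mpositives} applied with $m = 2$, each objective value contributes at most $2m = 4$, so the total is at most $4|f(F_1)|$. Since $f(F_1)$ is an antichain in the Pareto order on $\mathbb{R}^2$, it suffices to bound the largest antichain attainable by the image of \ojzj. A short case analysis by $|x|_1$ shows that every non-Pareto objective value (from $|x|_1 \in [1..k-1] \cup [n-k+1..n-1]$) is strictly dominated in both coordinates by every inner Pareto value (from $|x|_1 \in [k..n-k]$), and that the non-Pareto values within each sub-range form a chain. Hence any antichain containing an inner Pareto value consists entirely of Pareto-optimal values (size at most $n - 2k + 3$), while any antichain without an inner Pareto value has size at most a small constant; in both cases the size is at most $n - 2k + 3$. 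Combining the two estimates yields at most $4(n - 2k + 3) \le N$ positive-CD elements in $F_1$, so all of them -- including the positive-CD representative of $f(x)$ produced in the previous paragraph -- enter $P_{t+1}$.

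For the ``moreover'' part, an induction on~$t$ suffices: a Pareto-optimal objective value present in $R_t$ necessarily lies in $F_1$ (nothing can strictly dominate it), so by the first part of the lemma it also appears in $P_{t+1}$, and hence in $R_{t+1}$. The main technical obstacle is the antichain bound of the previous paragraph -- this is precisely the place where the assumption $N \ge 4(n - 2k + 3)$ enters tightly. Everything else reduces to an appeal to \Cref{lem:max2mpositives} or the structural observation that the balanced tie-breaker never discards an individual of positive crowding distance.
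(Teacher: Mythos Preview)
Your proposal is correct and follows exactly the approach the paper intends: the paper simply states that the proof is identical to that of Lemma~1 in \citet{DoerrQ23tec} because the balanced tie-breaker only affects individuals of crowding distance zero, and you have faithfully reconstructed that argument (positive-CD representative for every objective value in $F_1$, the $4|f(F_1)|$ bound via \Cref{lem:max2mpositives}, and the antichain bound $|f(F_1)|\le n-2k+3$). Your explicit antichain case analysis is more detailed than what the paper spells out, but the substance is the same.
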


We consider the following three stages.
\begin{itemize}
  \item \emph{Stage 1}: $P_t \cap S^*_{\mathrm{In}} = \emptyset$. The algorithm aims to find the first individual with objective value in $F^*_{\mathrm{In}}$.
  \item \emph{Stage 2}: There exists a $v \in F^*_{\mathrm{In}}$ such that $v \notin f(P_t)$. The algorithm tries to cover the entire set $F^*_{\mathrm{In}}$.
  \item \emph{Stage 3}: $F^*_{\mathrm{In}} \subseteq f(P_t)$, but $F^*_{\mathrm{Out}} \not\subseteq f(P_t)$. The algorithm aims to find the extremal values of the Pareto front, $F^*_{\mathrm{Out}}$.
\end{itemize}

\begin{proof}[Proof of \Cref{lem:stageOne}]
  Assume that the initial population contains no individual in the inner part of the Pareto front (as otherwise stage 1 takes no iterations). Hence all individuals have less than $k$ zeros or less than $k$ ones. Let $x$ be such an individual, and assume with out loss of generality that it has $\ell < k$ ones. The probability that one application of the mutation operator generates a search point in the inner part of the Pareto front is at least
  \begin{align*}
    \binom{n-\ell}{k-\ell} n^{-(k-\ell)}\left(1-\frac 1n\right)^{n - (k-\ell)} & \ge \binom{n}{k}n^{-k}\frac 1e  \\
                                                                               & \ge \frac{1}{ek^k} \eqqcolon p,
  \end{align*}
  where we used the estimate $\binom{n}{k} \ge (\frac nk)^k$. Note that this $p$ is a lower bound for the event that one offspring generation creates an individual in the inner part of the Pareto front. Hence it takes an expected number of $1/p$ such attempts. These happen in iterations of exactly $N$ attempts, proving the claimed bound.
\end{proof}

\begin{proof}[Proof of \Cref{lem:stageThree}]
  Consider one iteration $t$ of stage $3$.
  Let $B_k \coloneqq \max\left(1, \lfloor\frac{N}{n - 2k + 3}\rfloor - 4\right)$.

  We know that there is an $x \in P_t$ such that $\card{x}_1 = k$.
  We also know that the inner Pareto front $F^*_I$ is already covered and we only need to discover the extremal values $F^*_O$. Hence, the critical rank of the non-dominated sorting is~$1$.
  Therefore, by \Cref{lem:numberOfElementsBound} with $m = 2$ and $C = N$, we get that as long as we have less than $\frac{N}{\card{f(F_{i^*})}} - 4$ elements with a certain objective value, we keep all of them.
  Using \Cref{lem:keepParetoFrontOJZJ} to make sure there is at least $1$ individual kept and the fact that $B_k \le \max\left(1, \frac{N}{\card{F^*}} - 4\right) \le \max \left(1,  \frac{N}{\card{f(F_{i^*})}} - 4 \right)$, we reuse the arguments of \cref{eq:G^i_v} from \Cref{lem:discoverNewElement} to get the number of iterations needed to reach $B_k$ individuals $x$ with $\card{x}_1 = k$.
  Hence, the expected number of iterations to reach $B_k$ is $O(\log B_k) = O(\log \frac{N}{n})$.

  Once we have $B_k$ individuals $x$ with $\card{x}_1 = k$, we compute the probability to discover $0^n$.
  Since we are considering $N$ times uniformly selecting a parent, the probability to select an individual with $\card{x}_1 = k$ is $\frac{B_k}{N} = \frac{1}{N} \max\left(1, \lfloor\frac{N}{n - 2k + 3}\rfloor - 4\right) \ge \max \left(\frac{1}{N}, \frac{1}{n - 2k + 3} - \frac{5}{N} \right) \ge \frac{1}{6(n - 2k + 3)}$, where the last inequality comes from the fact that, if $4(n - 2k + 3) \le N \le 6(n - 2k + 3)$ then $\frac{1}{N} \ge \frac{1}{6(n - 2k + 3)}$, and if $N \ge 6(n - 2k + 3)$ then $\frac{1}{n - 2k + 3} - \frac{5}{N} \ge \frac{1}{n - 2k + 3} - \frac{5}{6(n - 2k + 3)} = \frac{1}{6(n - 2k + 3)}$.
  Conditioning on an $x$ with $\card{x}_1 = k$ being selected, the probability of generating $0^n$ is $\frac{1}{n}^k \left(1 - \frac{1}{n} \right)^{n - k} \ge \frac{1}{en^k}$.
  Therefore, the probability of generating $0^n$ from one parent selection process is greater than $\frac{1}{6en^{k}(n- 2k + 3)} $.
  Since we repeat this process $N$ times with uniformly random parents, the probability of generating $0^n$ in this iteration is at least $1 - \left(1 - \frac{1}{6en^{k}(n - 2k+3)}\right)^N \ge \frac{N}{N + 6en^{k}(n - 2k+3)} $, by \Cref{thm:boundOnNoSuccess}.
  Hence, the expected number of iterations needed to generate $0^n$ after reaching $B_k$ individuals is less than $1 + \frac{6en^{k}(n-2k+3)}{N}$.

  Therefore, the expected number of iterations needed to generate $0^n$ is $O(\log \frac{N}{n}) + 1 + \frac{6en^{k}(n-2k+3)}{N}$.
  The case for $1^n$ is symmetrical, which concludes the proof.
\end{proof}

\begin{proof}[Proof of \Cref{th:runtimeBalancedNSGAOJZJ}]
  The sum of the expected durations of Stage 1 to 3 is $O\left(n + \log \frac{N}{n} + \frac{n^{k+1}}{N}\right)$ iterations. We note that $\log \frac Nn$ is $O(n)$ for $N \le 4^n$, hence in this case the theorem is already proven.

  Hence assume that $N > 4^n$. Then the probability that the initial population does not contain a particular search point $x \in \{0,1\}^n$ is $(1 - 2^{-n})^N \le \exp(-2^{-n}N) \le \exp(-N^{1/2})$. By a union bound, the probability that there is an $x$ missing in the initial population, is at most $2^n \exp(-N^{1/2})$. Only in this case, the runtime of the algorithm is a positive number of iterations. Since our proofs above have not made any particular assumptions on the initial population, the $O\left(n + \log \frac{N}{n} + \frac{n^{k+1}}{N}\right)$ iterations bound is valid also conditional on this initial situation. Hence for $N > 4^n$, the expected number of iterations to find the full Pareto front is at most $2^n \exp(-N^{1/2}) O\left(n + \log \frac{N}{n} + \frac{n^{k+1}}{N}\right) = o(1)$ iterations.
\end{proof}

\subsection*{Runtime Analysis on Bi-Objective \lotz}

\begin{lemma}[Lemma~$7$ from \cite{ZhengD23aij}]
  Consider one iteration of the balanced \nsga with population size $N \geq 4(n + 1)$ optimizing the \lotz function.
  Assume that in some iteration $t \in \N$, the combined parent and offspring population~$R_t$ contains a solution~$x$ with rank~$1$.
  Then also the next parent population~$P_{t + 1}$ contains an individual~$y$ with $f(y) = f(x)$.
  In particular, for all $k \in [0 .. n]$, once the parent population contains an individual with objective value $(k, n - k)$, it does so for all future iterations.
\end{lemma}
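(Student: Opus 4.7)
The plan is to split the analysis into two cases according to whether the first front $F_1$ of the non-dominated sorting of $R_t$ fits into $P_{t+1}$ or not. If $|F_1| \leq N$, then the entire first front is selected into $P_{t+1}$, and we may simply take $y = x$. The substantive case is $|F_1| > N$, in which $F_1$ is itself the critical rank $F_{i^*}$ and we must show that the selection based on crowding distance, followed by our balanced tie-breaker, retains at least one individual with objective value $f(x)$.

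The key structural observation is that the objective values of mutually non-dominated \lotz solutions form an antichain in $[0..n]^2$ under componentwise order, and every such antichain has size at most $n+1$ (a maximum antichain is the diagonal). Hence $|f(F_1)| \leq n+1$. Combining this with the argument of \Cref{lem:max2mpositives} applied to $F_1$ (the proof of that lemma only uses per-objective sorting within the rank and thus goes through verbatim), for each objective value $A \in f(F_1)$ the number of individuals in $F_1$ with $f$-value $A$ and positive crowding distance is at least one and at most $2m = 4$. Summing over the at most $n+1$ distinct values, the total number of positive-CD individuals in $F_1$ is at most $4(n+1) \leq N$.

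From here the conclusion is immediate: since individuals with positive crowding distance strictly precede zero-CD ones in the lexicographic selection, all positive-CD individuals of $F_1$ are definitely selected into $P_{t+1}$ before the balanced tie-breaker has any effect, and the previous step guarantees that at least one of them has objective value $f(x)$. The ``in particular'' claim then follows by induction on $t$: an individual $x$ with Pareto-optimal value $(k, n-k)$ cannot be strictly dominated by anything, so it lies in $F_1$ of the combined population $R_t$ whenever it appears, and the main statement of the lemma provides a surviving copy in $P_{t+1}$.

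The main obstacle I foresee is conceptual rather than computational, namely verifying that the balanced tie-breaking step truly does not interfere with the selection of the positive-CD individuals. Because our modification of \Cref{alg:NSGA_tieBreaker} only re-randomizes the choice within the population $C_{c^*}$ of individuals sharing the critical crowding distance, positive-CD individuals are never demoted below zero-CD ones; thus the classical survival argument carries over to the balanced variant essentially unchanged, and the cited Lemma~7 of Zheng and Doerr applies in our setting.
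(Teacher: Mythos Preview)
Your proposal is correct and matches the paper's approach: the paper gives no separate proof here but simply cites Lemma~7 of Zheng and Doerr, relying (as it does explicitly for the analogous \Cref{lem:keepParetoFrontOJZJ}) on the fact that the balanced tie-breaker acts only within the critical crowding-distance class and therefore leaves the survival of positive-CD individuals untouched. Your reconstruction --- bounding $|f(F_1)| \le n+1$ via the antichain structure of rank-$1$ objective values, bounding the positive-CD individuals per objective value by $2m=4$ (with at least one existing) via the argument underlying \Cref{lem:max2mpositives}, and concluding that all at most $4(n+1) \le N$ positive-CD individuals are selected before the tie-breaker is invoked --- is precisely the cited argument.
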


\begin{proof}[Proof of \Cref{lem:lotzQuicklyIncreasingPopulation}]
  For all $t \in \N$, let $X_t = |Y_t|$.
  We assume that $B > 1$, as the theorem follows directly otherwise, since $X_0 \geq 1$.
  In addition, for the sake of simplicity, we assume implicitly for the remainder of the proof that we condition on~$t_0$.

  We aim to apply the multiplicative up-drift theorem (\Cref{th:multiplicativeUpDriftTheorem}) to $(X_t)_{t \in \N}$ and~$T$.
  We say that \emph{the algorithm finds an improvement} if and only if there is a $t^* \in \N$ and a $z \in R_{t^* + t_0}$ such that $f_1(z) > v$ and such that, for all $t \in [0 .. t^* - 1]$, it holds that $X_t < B$.
  Note that we stop once the algorithm finds an improvement.

  Let $\delta = \frac{1}{2e}$, and let $t \in \N$ with $t < T$.
  We show that for all $a \in [B - 1]$, it holds that $(X_{t + 1} \mid X_t = a) \succeq \Bin(N, (1 + \delta) a/N)$.
  Since $t < T$, the algorithm did not find an improvement.
  Hence, all individuals in~$Y_t$ have rank~$1$.
  Furthermore, it holds by \Cref{lem:lotzQuicklyIncreasingPopulation} that at least one individual from~$Y_t$ is selected for the next iteration.
  Similarly, \Cref{lem:numberOfElementsBound} guarantees that at least $\min(\frac{N}{n + 1} - 4, X_t)$ individuals get selected.
  Since $t < T$ and thus $X_t < B$, and since $B > 1$, it holds that $\min(\frac{N}{n + 1} - 4, X_t) = X_t$.
  Thus, all individuals in~$Y_t$ are selected for the next iteration.
  In addition, the algorithm produces~$N$ offspring, which may increase~$X_t$.

  When considering the process of generating a single offspring~$y$, it increases~$X_t$ if it holds that $f_1(y) = v$.
  The probability of creating such an improving~$y$ consists of (i) choosing an individual~$x \in Y_t$, and (ii) creating a copy of~$x$.
  The probability for (i) is at least $X_t/N$.
  And the probability for (ii) is at least (for sufficiently large~$n$) $(1 - \frac{1}{n})^n \geq \frac{1}{2e}$.
  Thus, the probability of creating an improving~$y$ is at least $\delta \frac{X_t}{N}$.
  Let $Z \sim \Bin(N, \delta \frac{X_t}{N})$.
  Since the algorithm generates~$N$ offspring, it follows that $X_{t + 1} \geq X_t + Z$, thus showing $(X_{t + 1} \mid X_t = a) \succeq \Bin(N, (1 + \delta) a/N)$.

  In order to apply \Cref{th:multiplicativeUpDriftTheorem}, we note that $\delta \in (0, 1]$ and that $(1 + \delta)^{-1} = \frac{2e}{2e + 1} \geq \frac{1}{2}$.
  Since $B > 1$, we obtain
  \begin{align*}
    B - 1
    \leq \left\lfloor \frac{N}{n + 1} \right\rfloor - 4
    \leq \frac{N}{2}
    \leq \min \{N, (1 + \delta)^{-1} N\} .
  \end{align*}
  Thus, the requirements for \Cref{th:multiplicativeUpDriftTheorem} are satisfied, and we obtain $E[T'] = O(\log B)$.
  This proves the first part of the statement.

  The second part with exchanging~$f_1$ by~$f_2$ follows by noting that individuals that have a maximal $f_2$-value also have rank~$1$.
  The other arguments remain unchanged, thus concluding the proof.
\end{proof}

\begin{proof}[Proof of \Cref{thm:runtimeBalancedNSGALOTZ}]
  We prove the theorem by considering two phases.
  While differing slightly in details, the two phases follow mainly the same outline.
  Thus, we only provide one proof, utilizing variables to mark the different cases.

  Phase~$1$ considers the time until the algorithm's population contains for the first time the all-$1$s bit string.
  Let~$T_1$ denote this time.

  Phase~$2$ starts at $T_1 + 1$ and denotes the remaining time until the algorithm covers the entire Pareto front.
  For this phase, we require pessimistically that we start at objective value $(n, 0)$ and need to create consecutively individuals whose first component in objective value is smaller, not skipping any objective value.
  Let~$T_2$ denote the first time under these assumptions that an individual with objective value $(0, n)$ is in the population.

  The runtime of the algorithm on \lotz is bounded from above by $T_1 + T_2$.
  Hence, its expected runtime is bounded from above by $E[T_1] + E[T_2]$.
  We bound either expected value in the same way, relying on variables that differ between the phases.

  Consider either phase.
  Let $t \in \N$, and let $v_t \in [0 .. n]$ denote the current progress of a phase, which means the following:
  For phase~$1$, the quantity~$v_t$ is $\max_{y \in R_t} f_1(y)$.
  For phase~$2$, the quantity~$v_t$ is $\min \{i \in [0 .. n] \mid \forall j \in [i .. n] \exists y \in R_t\colon f(y) = (j, n - j)\}$.
  We use the terminology of making progress, wich means the following for each phase:
  For phase~$1$, we make progress when $v_{t + 1} > v_t$.
  For phase~$2$, we make progress when $v_{t + 1}< v_t$.
  Let $p \in (0, 1]$ denote a lower bound on the the probability of the following event when considering a single iteration of creating offspring:
  Create an offspring~$y$ that makes progress, conditional on choosing an $x \in P_t$ with $f_1(x) = v_t$.

  We split the current phase into segments, where each segment denotes the time until we make progress.
  Note that either phase has at most~$n$ segments.
  For each segment, we first wait until the current population~$R_t$ has at least $\max(1, \lfloor \frac{N}{4(n + 1)} \rfloor - 4) \eqqcolon B$ individuals with $f_1(v_t)$.
  By \Cref{lem:lotzQuicklyIncreasingPopulation}, the expected duration of this step is $O(\log B) = O(\log \frac{N}{n + 1})$.
  Furthermore, by \Cref{lem:numberOfElementsBound,lem:lotzQuicklyIncreasingPopulation}, at least~$B$ such individuals are maintained in~$P_t$ from now on.

  Afterward, we wait until we make progress by waiting for the following event to occur:
  During mutation, we choose an individual $x \in P_t$ with $f_1(x) = v_t$, and the created offspring makes progress.
  Due to the previous paragraph, the probability of choosing an individual~$x$ as intended is at least~$\frac{B}{N}$.
  By definition, the probability of then creating an offspring that makes progress is at least~$p$.
  Hence, the probability of making progress with a single mutation is at least $\frac{B}{N} p \eqqcolon q$.

  Since the algorithm creates~$N$ offspring in each iteration, we estimate the probability that at least one of the offspring makes progress as follows.
  To this end, we apply \Cref{thm:boundOnNoSuccess} and obtain a lower bound of
  \begin{align*}
    1 - (1 - q)^N
    \geq 1 - \frac{1}{1 + qN}
    = \frac{qN}{1 + qN} .
  \end{align*}
  Hence, we wait at most $\frac{1 + qN}{qN} = 1 + \frac{1}{qN} = 1 + \frac{1}{Bp}$ in expectation until we make progress and end a segment.

  In total, each segment lasts in expectation at most $O(\log \frac{N}{n + 1}) + 1 + \frac{1}{Bp} = O(\frac{n}{Np} + \log \frac{N}{n + 1})$.
  Since each phase has at most~$n$ segments, each phase lasts at most $O(\frac{n^2}{Np} + n \log \frac{N}{n + 1})$.

  Last, we determine a lower bound for~$p$ for both phases.
  We start with phase~$1$.
  Given an individual~$x$ as stated in the definition of~$p$, an offspring makes progress if it flips the leftmost~$0$ in~$x$ and copies the rest.
  That is, $p \geq \frac{1}{n} (1 - \frac{1}{n})^{n - 1} \geq \frac{1}{e n}$.
  For phase~$2$, we get the same bound, with the difference that we flip the rightmost~$1$ (which still has probability~$\frac{1}{n}$).
  Substituting~$p$ into our bound concludes the runtime part of the proof.

  The number of function evaluations follows from noting that each iteration requires an evaluation of~$N$ objective values.
\end{proof}

\subsection*{Empirical Runtime Analysis}

\Cref{fig:lotzJointPlot,fig:ojzjJointPlot} showcase our results explained in the empirical section for the functions \lotz and \ojzj with $k = 3$, respectively.
Now, $M$ denotes the size of the Pareto front of the respective problem.
We see that the results are qualitatively similar to \omm.
A classic \nsga is less robust to a suboptimal choice of~$N$ than the balanced \nsga.

\Cref{tab:significances} lists the statistical significances of the balanced \nsga having a faster runtime than the classic \nsga.
We see that the significance typically increases when increasing~$N$.
Especially, when choosing $N \in \{8M, 16M\}$ (where possible), the results are always statistically significant, even for a $p$-value of just~$0.001$.
This suggests that there is a clear advantage in the runtime of the balanced \nsga over the classic one when choosing suboptimal values of~$N$.

\begin{figure}[t]
  \includegraphics[width = \columnwidth]{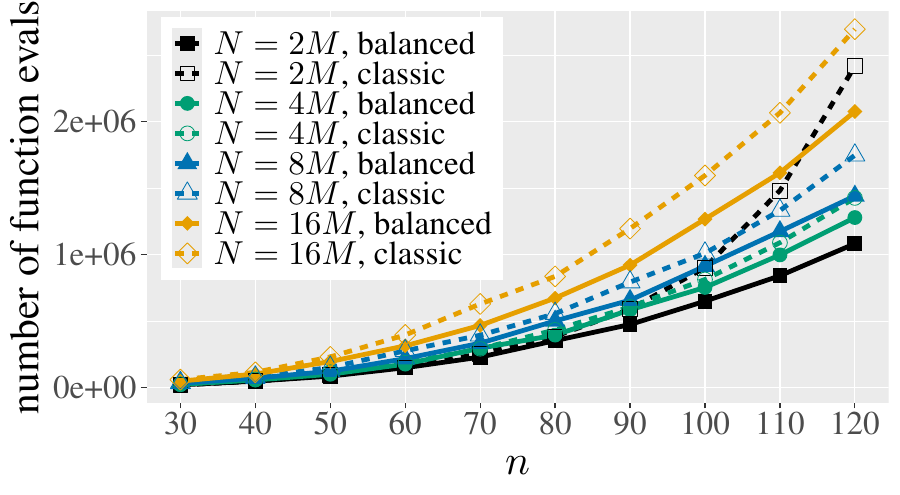}
  \caption{
    The average number of function evaluations of the classic (dashed lines) and the balanced (solid lines) \nsga optimizing \lotzLong, for the shown population sizes~$N$ and problem sizes~$n$.
    The value~$M$ denotes the size of the Pareto front, i.e., $M = n + 1$.
    Each point is the average of~$50$ independent runs.
  }
  \label{fig:lotzJointPlot}
\end{figure}

\begin{figure}
  \includegraphics[width = \columnwidth]{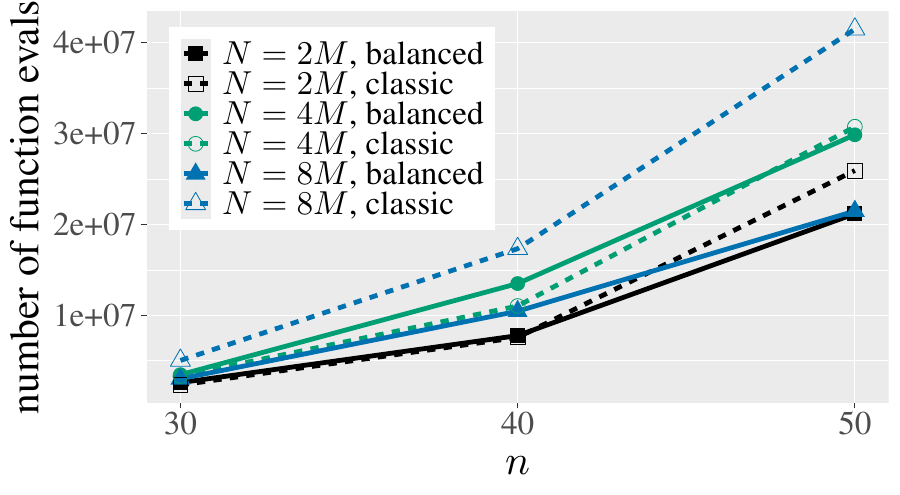}
  \caption{
    The average number of function evaluations of the classic (dashed lines) and the balanced (solid lines) \nsga optimizing \ojzjLong with $k = 3$, for the shown population sizes~$N$ and problem sizes~$n$.
    The value~$M$ denotes the size of the Pareto front, i.e., $M = n - 2k + 3 = n - 3$.
    Each point is the average of~$50$ independent runs.
  }
  \label{fig:ojzjJointPlot}
\end{figure}

\begin{table*}[b]
  \centering
  \caption{
    The statistical significance of the number of function evaluations per value of~$n$ of the balanced \nsga being lower that that of the classic \nsga, for the data depicted in \Cref{fig:ommJointPlot,fig:lotzJointPlot,fig:ojzjJointPlot}, split with respect to the choice of the population size~$N$.
    The value~$M$ denotes the size of the Pareto front, as reported in the respective figures.
    The $p$-value is the result of a one-sided Mann--Whitney~$U$ test, rounded to three digits, indicating values that are less than~$0.001$.
    Bold $p$-values indicate those that are at most~$0.05$, which are considered to be statistically significant.
  }
  \label{tab:significances}
  \begin{tabular}{*{3}{lrrr@{\hspace*{2 em}}}}
    benchmark & $n$   & $N$   & $p$-value              & benchmark & $n$   & $N$   & $p$-value              & benchmark & $n$  & $N$  & $p$-value              \\
    \cmidrule[0.75 pt](r{1.5 em}){1-4}
    \cmidrule[0.75 pt](r{1.5 em}){5-8}
    \cmidrule[0.75 pt](r{1.5 em}){9-12}
    \omm      & $30$  & $2M$  & $0.309$                & \lotz     & $30$  & $2M$  & $0.859$                & \ojzj     & $30$ & $2M$ & $0.61$                 \\
              &       & $4M$  & $\boldsymbol{0.001}$   &           &       & $4M$  & $0.274$                &           &      & $4M$ & $0.62$                 \\
              &       & $8M$  & $\boldsymbol{< 0.001}$ &           &       & $8M$  & $\boldsymbol{0.007}$   &           &      & $8M$ & $\boldsymbol{< 0.001}$ \\
              &       & $16M$ & $\boldsymbol{< 0.001}$ &           &       & $16M$ & $\boldsymbol{< 0.001}$                                                    \\
              & $40$  & $2M$  & $0.119$                &           & $40$  & $2M$  & $0.615$                &           & $40$ & $2M$ & $0.799$                \\
              &       & $4M$  & $\boldsymbol{0.022}$   &           &       & $4M$  & $0.068$                &           &      & $4M$ & $0.938$                \\
              &       & $8M$  & $\boldsymbol{< 0.001}$ &           &       & $8M$  & $0.059$                &           &      & $8M$ & $\boldsymbol{< 0.001}$ \\
              &       & $16M$ & $\boldsymbol{< 0.001}$ &           &       & $16M$ & $\boldsymbol{0.025}$                                                      \\
              & $50$  & $2M$  & $0.077$                &           & $50$  & $2M$  & $0.086$                &           & $50$ & $2M$ & $\boldsymbol{0.034}$   \\
              &       & $4M$  & $0.242$                &           &       & $4M$  & $\boldsymbol{0.006}$   &           &      & $4M$ & $0.192$                \\
              &       & $8M$  & $\boldsymbol{< 0.001}$ &           &       & $8M$  & $\boldsymbol{< 0.001}$ &           &      & $8M$ & $\boldsymbol{< 0.001}$ \\
              &       & $16M$ & $\boldsymbol{< 0.001}$ &           &       & $16M$ & $\boldsymbol{< 0.001}$                                                    \\
              & $60$  & $2M$  & $\boldsymbol{0.01}$    &           & $60$  & $2M$  & $0.526$                                                                   \\
              &       & $4M$  & $0.486$                &           &       & $4M$  & $0.14$                                                                    \\
              &       & $8M$  & $\boldsymbol{< 0.001}$ &           &       & $8M$  & $\boldsymbol{< 0.001}$                                                    \\
              &       & $16M$ & $\boldsymbol{< 0.001}$ &           &       & $16M$ & $\boldsymbol{< 0.001}$                                                    \\
              & $70$  & $2M$  & $\boldsymbol{0.035}$   &           & $70$  & $2M$  & $\boldsymbol{0.031}$                                                      \\
              &       & $4M$  & $\boldsymbol{0.001}$   &           &       & $4M$  & $0.479$                                                                   \\
              &       & $8M$  & $\boldsymbol{< 0.001}$ &           &       & $8M$  & $\boldsymbol{< 0.001}$                                                    \\
              &       & $16M$ & $\boldsymbol{< 0.001}$ &           &       & $16M$ & $\boldsymbol{< 0.001}$                                                    \\
              & $80$  & $2M$  & $0.387$                &           & $80$  & $2M$  & $\boldsymbol{0.035}$                                                      \\
              &       & $4M$  & $0.179$                &           &       & $4M$  & $\boldsymbol{0.027}$                                                      \\
              &       & $8M$  & $\boldsymbol{< 0.001}$ &           &       & $8M$  & $\boldsymbol{0.004}$                                                      \\
              &       & $16M$ & $\boldsymbol{< 0.001}$ &           &       & $16M$ & $\boldsymbol{< 0.001}$                                                    \\
              & $90$  & $2M$  & $0.147$                &           & $90$  & $2M$  & $\boldsymbol{< 0.001}$                                                    \\
              &       & $4M$  & $0.266$                &           &       & $4M$  & $0.251$                                                                   \\
              &       & $8M$  & $\boldsymbol{< 0.001}$ &           &       & $8M$  & $\boldsymbol{< 0.001}$                                                    \\
              &       & $16M$ & $\boldsymbol{< 0.001}$ &           &       & $16M$ & $\boldsymbol{< 0.001}$                                                    \\
              & $100$ & $2M$  & $0.149$                &           & $100$ & $2M$  & $\boldsymbol{< 0.001}$                                                    \\
              &       & $4M$  & $\boldsymbol{0.011}$   &           &       & $4M$  & $\boldsymbol{0.021}$                                                      \\
              &       & $8M$  & $\boldsymbol{< 0.001}$ &           &       & $8M$  & $\boldsymbol{0.003}$                                                      \\
              &       & $16M$ & $\boldsymbol{< 0.001}$ &           &       & $16M$ & $\boldsymbol{< 0.001}$                                                    \\
              & $110$ & $2M$  & $\boldsymbol{0.011}$   &           & $110$ & $2M$  & $\boldsymbol{< 0.001}$                                                    \\
              &       & $4M$  & $0.053$                &           &       & $4M$  & $\boldsymbol{0.019}$                                                      \\
              &       & $8M$  & $\boldsymbol{< 0.001}$ &           &       & $8M$  & $\boldsymbol{< 0.001}$                                                    \\
              &       & $16M$ & $\boldsymbol{< 0.001}$ &           &       & $16M$ & $\boldsymbol{< 0.001}$                                                    \\
              & $120$ & $2M$  & $0.365$                &           & $120$ & $2M$  & $\boldsymbol{< 0.001}$                                                    \\
              &       & $4M$  & $\boldsymbol{0.017}$   &           &       & $4M$  & $\boldsymbol{< 0.001}$                                                    \\
              &       & $8M$  & $\boldsymbol{< 0.001}$ &           &       & $8M$  & $\boldsymbol{< 0.001}$                                                    \\
              &       & $16M$ & $\boldsymbol{< 0.001}$ &           &       & $16M$ & $\boldsymbol{< 0.001}$                                                    \\
  \end{tabular}
\end{table*}

In Figure~\ref{fig:4ommplot}, we show the runtimes of the balanced \nsga on the 4-objective \omm problem. We recall that the classic \nsga could not optimize this benchmark in subexponential time~\cite{ZhengD24many}. These difficulties were also seen in experiments, where, for example, the \nsga with population size four times the Pareto front size~$M$ in the first 1000 iterations (that is, more than $1\,600\,000$ function evaluations) never covered more than $60$\,\% of the Pareto front of the 4-objective \omm problem with problem size $n=40$.

Our results in Figure~\ref{fig:4ommplot} show that the balanced \nsga does not have these difficulties. For example, for $n=40$ and $N=4M$, the balanced \nsga computed the full Pareto front within less than $147\,153$ function evaluations on average.

\begin{figure}[t]
  \includegraphics[width = \columnwidth]{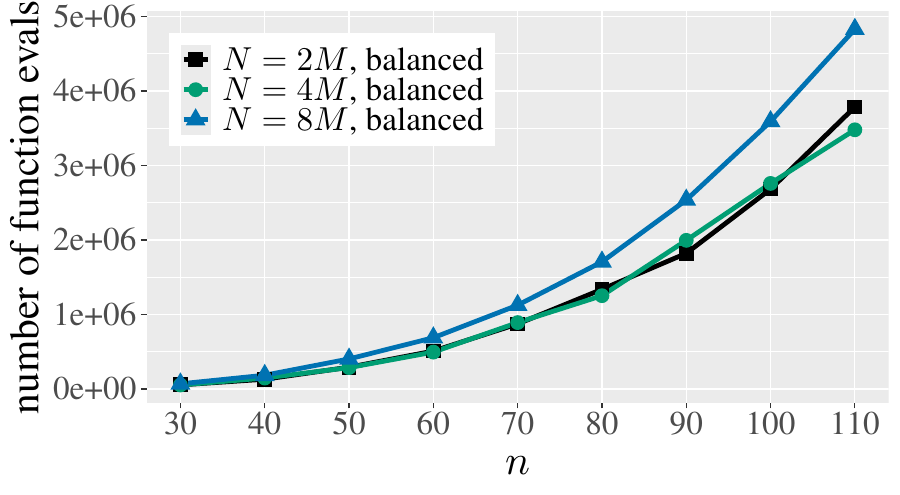}
  \caption{
    Average runtimes of the balanced \nsga with different population sizes $N$ on the 4-objective \omm problem.
    The value~$M$ denotes the size of the Pareto front, i.e., $M = (n/2 + 1)^2$.
    Each point is the average of~$50$ independent runs.
  }
  \label{fig:4ommplot}
\end{figure}

\end{document}